%% file: manuscript.tex
\documentclass[11pt,letterpaper]{article}
\usepackage[margin=1in]{geometry}
\usepackage{amsmath,amssymb,amsthm}
\usepackage{graphicx}
\usepackage{booktabs}
\usepackage{multirow}
\usepackage{microtype}
\usepackage{caption}
\usepackage{placeins}
\usepackage[hidelinks]{hyperref}
\usepackage[capitalise,noabbrev]{cleveref}
\graphicspath{{figures/}}
\input{macros.tex}

\newtheorem{proposition}{Proposition}

\newenvironment{widefigure}{\begin{figure}[tbp]}{\end{figure}}
\newenvironment{widetable}{\begin{table}[tbp]}{\end{table}}
\newcommand{\eqbreak}{\qquad}
\newcommand{\eqbreakq}{\quad}
\newcommand{\eqsplit}{}
\newcommand{\eqsplitdelim}{}

\title{Floquet Fibre Geometry and Higher-Order Reduced Coordinates for
Off-Manifold Transients near Nonlinear Aeroelastic Flutter}
\author{
Puxue Tan\\
\small William Wright Technology Centre (W-Tech)\\
\small School of Mechanical \& Aerospace Engineering\\
\small Queen's University Belfast\\
\small 50 Malone Road, Belfast, BT9 5BS, United Kingdom
}
\date{}

\begin{document}
\maketitle

\begin{abstract}
\input{sections/00_abstract}
\end{abstract}

\noindent\textbf{Keywords:} nonlinear model reduction; invariant foliations;
phase--isostable coordinates; Floquet theory; aeroelastic flutter; off-manifold
transients

\input{sections/01_introduction}
\input{sections/02_theory}
\input{sections/03_methods}
\input{sections/04_results}
\FloatBarrier
\input{sections/05_discussion}
\input{sections/06_conclusions}

\section*{Acknowledgements}
The author thanks Emeritus Professor A. J. Roberts for detailed comments on an
earlier version of the manuscript, for drawing attention to the earlier
literature on dynamically appropriate initial conditions and initial slip, and
for suggesting the Hopf-normal-form example used in Section~\ref{sec:firstorder}
to illustrate non-orthogonal isochron projection.

\section*{Data and code availability}
The numerical data, model configurations, trained-model parameters and analysis
code supporting the findings of this study are preserved in a version-controlled
reproducibility archive and are available from the author upon reasonable
request. The archive contains the aeroelastic model, trajectory-generation
procedure, Floquet analysis, coordinate-map definitions and evaluation routines.
All numerical quantities reported in the manuscript are generated
programmatically from machine-readable records.

\section*{Author contributions}
Puxue Tan conceived the study, developed the methodology and software, performed
the formal analysis and numerical investigation, validated the results, prepared
the visualisations, and wrote and revised the manuscript.

\section*{Funding}
This research received no external funding.

\section*{Competing interests}
The author declares no competing interests.

\bibliographystyle{plain}
\bibliography{references}

\end{document}

%% file: macros.tex
\newcommand{\AmplitudeScale}{0.005771}
\newcommand{\AngleVperpWfMax}{71.7}
\newcommand{\AngleVperpWfMedian}{64.3}
\newcommand{\AngleVperpWfMin}{48.5}
\newcommand{\AsixCorrectionScalingData}{2.15}
\newcommand{\AsixCorrectionScalingSynthetic}{2.32}
\newcommand{\AsixFastFiberResidual}{1.478\times 10^{-6}}
\newcommand{\AsixFirstOrderResidual}{1.452\times 10^{-6}}
\newcommand{\AsixLcoCorrection}{1.335\times 10^{-30}}
\newcommand{\AsixParameters}{75}

\newcommand{\BootstrapResamples}{10000}
\newcommand{\BsixParameters}{210}
\newcommand{\BsixSymmetryAmplitudeMax}{8.378\times 10^{-8}}
\newcommand{\BsixValidationImprovement}{23.1}

\newcommand{\CanonicalAsixFivePeriodCiHigh}{-0.113}
\newcommand{\CanonicalAsixFivePeriodCiLow}{-0.517}

\newcommand{\CanonicalAsixFivePeriodRatio}{2.766}
\newcommand{\CanonicalAsixFivePeriodUnits}{5}

\newcommand{\CanonicalAsixOnePeriodCiHigh}{-0.122}
\newcommand{\CanonicalAsixOnePeriodCiLow}{-0.477}

\newcommand{\CanonicalAsixOnePeriodRatio}{2.133}
\newcommand{\CanonicalAsixOnePeriodUnits}{4}

\newcommand{\CanonicalAsixTwoPeriodCiHigh}{-0.017}
\newcommand{\CanonicalAsixTwoPeriodCiLow}{-0.316}

\newcommand{\CanonicalAsixTwoPeriodRatio}{1.746}
\newcommand{\CanonicalAsixTwoPeriodUnits}{5}

\newcommand{\CanonicalBsixFivePeriodCiHigh}{-0.114}
\newcommand{\CanonicalBsixFivePeriodCiLow}{-0.53}

\newcommand{\CanonicalBsixFivePeriodRatio}{2.904}
\newcommand{\CanonicalBsixFivePeriodUnits}{6}

\newcommand{\CanonicalBsixOnePeriodCiHigh}{-0.157}
\newcommand{\CanonicalBsixOnePeriodCiLow}{-0.619}

\newcommand{\CanonicalBsixOnePeriodRatio}{1.777}
\newcommand{\CanonicalBsixOnePeriodUnits}{0}

\newcommand{\CanonicalBsixTwoPeriodCiHigh}{0.004}
\newcommand{\CanonicalBsixTwoPeriodCiLow}{-0.411}

\newcommand{\CanonicalBsixTwoPeriodRatio}{1.297}
\newcommand{\CanonicalBsixTwoPeriodUnits}{7}

\newcommand{\CanonicalPhaseAsixFivePeriodRatio}{1.316}

\newcommand{\CanonicalPhaseAsixOnePeriodRatio}{1.492}

\newcommand{\CanonicalPhaseAsixTwoPeriodRatio}{1.291}

\newcommand{\CanonicalPhaseBsixFivePeriodRatio}{0.97}

\newcommand{\CanonicalPhaseBsixOnePeriodCiHigh}{0.142}
\newcommand{\CanonicalPhaseBsixOnePeriodCiLow}{-0.004}
\newcommand{\CanonicalPhaseBsixOnePeriodPhaseA}{1.26}
\newcommand{\CanonicalPhaseBsixOnePeriodPhaseB}{0.78}
\newcommand{\CanonicalPhaseBsixOnePeriodRatio}{0.821}

\newcommand{\CanonicalPhaseBsixTwoPeriodCiHigh}{0.167}
\newcommand{\CanonicalPhaseBsixTwoPeriodCiLow}{0.011}
\newcommand{\CanonicalPhaseBsixTwoPeriodPhaseA}{1.2}
\newcommand{\CanonicalPhaseBsixTwoPeriodPhaseB}{0.74}
\newcommand{\CanonicalPhaseBsixTwoPeriodRatio}{0.767}

\newcommand{\ConventionAmplitudeMedian}{1.066\times 10^{-4}}
\newcommand{\ConventionPhaseMedian}{8.079\times 10^{-6}}
\newcommand{\CorrectionAsixAmplitudeMedian}{1.309\times 10^{-4}}
\newcommand{\CorrectionAsixPhaseMedian}{5.626\times 10^{-6}}
\newcommand{\CorrectionBsixAmplitudeMedian}{2.191\times 10^{-4}}
\newcommand{\CorrectionBsixPhaseMedian}{1.383\times 10^{-5}}
\newcommand{\CubicPlungeStiffness}{100}
\newcommand{\DataSymmetryResidual}{1.641\times 10^{-12}}
\newcommand{\DecompositionPhases}{512}
\newcommand{\FastDecayFivePeriods}{5.421\times 10^{-5}}
\newcommand{\FastExponent}{-0.343608}
\newcommand{\FastModeClosure}{1.916\times 10^{-11}}

\newcommand{\FloorFirstOrder}{1\times 10^{-5}}
\newcommand{\FloorSymmetry}{4.642\times 10^{-6}}

\newcommand{\FlowDt}{0.1}
\newcommand{\FlutterFrequency}{0.930797}
\newcommand{\FlutterOnsetDiscrepancy}{0.109}
\newcommand{\FlutterVelocity}{7.490584}
\newcommand{\FrameAntiperiodicityResidual}{4.562\times 10^{-11}}
\newcommand{\GaugeAsixAlpha}{1.3599}
\newcommand{\GaugeAsixExplained}{0.9384}
\newcommand{\GaugeAsixGain}{0.7353}

\newcommand{\GaugeAsixRotation}{2.5}
\newcommand{\GaugeBsixAlpha}{1.5033}
\newcommand{\GaugeBsixExplained}{0.8843}
\newcommand{\GaugeBsixGain}{0.6652}

\newcommand{\GaugeBsixRotation}{5.1}

\newcommand{\GaugeFitSamples}{40020}
\newcommand{\GridNodesConvergence}{2048}
\newcommand{\GridNodesPrimary}{4096}

\newcommand{\IntegrationStep}{0.02}
\newcommand{\LcoAngularFrequency}{1.098967}
\newcommand{\LcoPeriod}{5.717358}
\newcommand{\LocalAsixFirstPeriod}{0.815}
\newcommand{\LocalAsixMedian}{0.998}
\newcommand{\LocalAsixSettled}{0.999}
\newcommand{\LocalBsixFirstPeriod}{0.775}
\newcommand{\LocalBsixMedian}{0.996}
\newcommand{\LocalBsixSettled}{0.998}
\newcommand{\MultiplierFast}{0.140222}
\newcommand{\MultiplierNeutral}{1}
\newcommand{\MultiplierSlowImag}{0.773514}
\newcommand{\MultiplierSlowModulus}{0.862784}
\newcommand{\MultiplierSlowReal}{0.382194}
\newcommand{\NeutralMultiplierResidual}{9.758\times 10^{-12}}

\newcommand{\ObliqueConditionMax}{46.73176}

\newcommand{\ObliqueGridConvergenceRms}{4.167\times 10^{-10}}
\newcommand{\ObliqueLcoAmplitude}{1.131\times 10^{-9}}
\newcommand{\ObliqueLcoPhaseError}{1.11\times 10^{-10}}
\newcommand{\ObliqueSymmetryAmplitudeMax}{1.484\times 10^{-7}}
\newcommand{\ObliqueTrainSlopeInvariance}{2}
\newcommand{\ObliqueValidationSlopeInvariance}{2.05}

\newcommand{\OperatingVelocity}{7.8}

\newcommand{\OrthogonalLcoPhaseError}{1.06\times 10^{-10}}

\newcommand{\PairedAsixObliqueFivePeriodInverse}{1.68}

\newcommand{\PairedAsixObliqueInvarianceInverse}{0.8}

\newcommand{\PairedAsixObliqueOnePeriodInverse}{0.79}

\newcommand{\PairedBsixObliqueFivePeriodInverse}{1}

\newcommand{\PairedBsixObliqueInvarianceInverse}{0.55}

\newcommand{\PairedBsixObliqueOnePeriodInverse}{0.51}

\newcommand{\PerturbationCalibration}{3.331\times 10^{-16}}
\newcommand{\PhaseCount}{16}
\newcommand{\PilotTrajectories}{120}
\newcommand{\PilotWorstTerminalDistance}{2.231\times 10^{-4}}

\newcommand{\RestoredAsixFivePeriodCiHigh}{-0.037}
\newcommand{\RestoredAsixFivePeriodCiLow}{-0.501}

\newcommand{\RestoredAsixFivePeriodRatio}{2.267}
\newcommand{\RestoredAsixFivePeriodUnits}{5}

\newcommand{\RestoredAsixInvarianceCiHigh}{0.096}
\newcommand{\RestoredAsixInvarianceCiLow}{-0.118}

\newcommand{\RestoredAsixInvariancePhaseADeltaA}{1.04}
\newcommand{\RestoredAsixInvariancePhaseADeltaB}{1.13}
\newcommand{\RestoredAsixInvariancePhaseADeltaC}{1.15}
\newcommand{\RestoredAsixInvariancePhaseADeltaD}{1.14}
\newcommand{\RestoredAsixInvariancePhaseADeltaE}{1.17}
\newcommand{\RestoredAsixInvariancePhaseBDeltaA}{1.62}
\newcommand{\RestoredAsixInvariancePhaseBDeltaB}{1.5}
\newcommand{\RestoredAsixInvariancePhaseBDeltaC}{1.06}
\newcommand{\RestoredAsixInvariancePhaseBDeltaD}{0.78}
\newcommand{\RestoredAsixInvariancePhaseBDeltaE}{0.65}
\newcommand{\RestoredAsixInvarianceRatio}{1.084}
\newcommand{\RestoredAsixInvarianceUnits}{7}

\newcommand{\RestoredAsixOnePeriodCiHigh}{0.098}
\newcommand{\RestoredAsixOnePeriodCiLow}{-0.135}

\newcommand{\RestoredAsixOnePeriodRatio}{1.071}
\newcommand{\RestoredAsixOnePeriodUnits}{10}

\newcommand{\RestoredBsixFivePeriodCiHigh}{0.05}
\newcommand{\RestoredBsixFivePeriodCiLow}{-0.32}

\newcommand{\RestoredBsixFivePeriodRatio}{1.483}
\newcommand{\RestoredBsixFivePeriodUnits}{8}

\newcommand{\RestoredBsixInvarianceCiHigh}{0.225}
\newcommand{\RestoredBsixInvarianceCiLow}{-0.044}

\newcommand{\RestoredBsixInvariancePhaseADeltaA}{1.36}

\newcommand{\RestoredBsixInvariancePhaseADeltaE}{0.81}
\newcommand{\RestoredBsixInvariancePhaseBDeltaA}{1.96}

\newcommand{\RestoredBsixInvariancePhaseBDeltaE}{0.49}
\newcommand{\RestoredBsixInvarianceRatio}{0.819}
\newcommand{\RestoredBsixInvarianceUnits}{13}

\newcommand{\RestoredBsixOnePeriodCiHigh}{0.291}
\newcommand{\RestoredBsixOnePeriodCiLow}{-0.121}

\newcommand{\RestoredBsixOnePeriodRatio}{0.758}
\newcommand{\RestoredBsixOnePeriodUnits}{12}

\newcommand{\RidgeWeight}{1\times 10^{-6}}
\newcommand{\SampleInterval}{0.1}
\newcommand{\ScaleAsixAnisotropy}{1.23}

\newcommand{\ScaleAsixRms}{0.885}
\newcommand{\ScaleBsixAnisotropy}{1.27}

\newcommand{\ScaleBsixRms}{0.777}
\newcommand{\ScaleChartAnisotropy}{1.29}

\newcommand{\ScaleChartRms}{1.242}
\newcommand{\ScaleFractionAsixInvariance}{0.46}
\newcommand{\ScaleFractionAsixOnePeriod}{0.55}
\newcommand{\ScaleFractionBsixInvariance}{0.48}
\newcommand{\ScaleFractionBsixOnePeriod}{0.57}
\newcommand{\SeamClosureResidual}{1.449\times 10^{-11}}

\newcommand{\SlopeAsixInvariance}{1.71}

\newcommand{\SlopeBsixInvariance}{1.33}

\newcommand{\SlopeObliqueInvariance}{1.87}
\newcommand{\SlopeObliqueOnePeriod}{1.87}

\newcommand{\SlopeOrthogonalInvariance}{1.01}
\newcommand{\SlopeOrthogonalOnePeriod}{0.98}
\newcommand{\SlowFrameConditionMax}{22.24258}
\newcommand{\StepCoarse}{0.04}
\newcommand{\StepConvergenceDeviation}{5.877\times 10^{-5}}
\newcommand{\StepFine}{0.01}

\newcommand{\TestDistinctPhases}{2}

\newcommand{\TestObliqueInvariance}{0.009145}
\newcommand{\TestObliqueOnePeriod}{0.233853}

\newcommand{\TestOffManifoldUnits}{20}

\newcommand{\TestOrthogonalInvariance}{0.031828}
\newcommand{\TestOrthogonalOnePeriod}{0.426594}
\newcommand{\TestSymmetryUnits}{22}
\newcommand{\TestTrajectories}{44}
\newcommand{\TrainBatchSize}{2048}
\newcommand{\TrainIterations}{6000}
\newcommand{\TrainLearningRate}{0.001}
\newcommand{\TrajTest}{44}
\newcommand{\TrajTrain}{56}
\newcommand{\TrajValidation}{20}
\newcommand{\TrajectoryDuration}{200}

\newcommand{\VperpFastCoefficientMedian}{2.36}
\newcommand{\VperpPhaseCoefficientMax}{36.5}
\newcommand{\VperpPhaseCoefficientMedian}{31.4}
\newcommand{\VperpPhaseCoefficientMin}{11.8}
\newcommand{\VperpPhaseShiftLargestDelta}{0.63}

\newcommand{\VperpSlowNormMedian}{2.08}
\newcommand{\WhitenedAsixFivePeriodRatio}{2.267}
\newcommand{\WhitenedAsixInvarianceRatio}{1.135}
\newcommand{\WhitenedAsixOnePeriodRatio}{1.111}
\newcommand{\WhitenedBsixFivePeriodRatio}{1.613}
\newcommand{\WhitenedBsixInvarianceRatio}{0.893}
\newcommand{\WhitenedBsixOnePeriodRatio}{0.814}

%% file: sections/00_abstract.tex
Assigning a reduced coordinate to an off-manifold state near an attracting
limit cycle is governed by invariant-fibre geometry and is not determined by
the reduced dynamics alone. The classical first-order choice is the linearised phase--isostable
chart, which projects along the fast Floquet direction tangent to the local
strong-stable fibre; a metric-orthogonal complement of the retained slow bundle
is easier to construct but coincides with that direction only by accident. We
make the consequence explicit in a local proposition: a chart satisfying the
linearised semiconjugacy relation leaves an invariance residual of second order
in the transverse amplitude $\delta$, whereas projection along a non-invariant
complement generically leaves a first-order term. On the limit cycle of a
nonlinear aeroelastic typical section beyond flutter, the two directions differ
by $\AngleVperpWfMin^{\circ}$ to $\AngleVperpWfMax^{\circ}$. In a test
specified before the held-out data were opened, replacing the metric-normal
direction by the fast Floquet direction changes the observed residual scaling
from $\delta^{\SlopeOrthogonalInvariance}$ to $\delta^{\SlopeObliqueInvariance}$,
with no fitted parameters. Learned higher-order corrections, pinned to this
chart and sharing a frozen reduced flow, reduce the pre-specified latent
residual on held-out data; two post-hoc diagnostics move or reverse that
ranking, but both remain convention dependent: one through coordinate
normalisation and the other through the reference chart. For this
benchmark, correct first-order Floquet geometry is necessary to remove the
generic $O(\delta)$ semiconjugacy defect under the frozen reduced dynamics,
whereas the additional value of the learned higher-order correction is not
identifiable from the available diagnostics.

%% file: sections/01_introduction.tex
\section{Introduction}
\label{sec:introduction}

Beyond the flutter boundary a nonlinear aeroelastic section does not simply
diverge. Structural or aerodynamic nonlinearity arrests the linear instability
and the system settles onto a limit cycle whose amplitude and period are set by
the nonlinearity, not by the initial disturbance
\cite{fung1993aeroelasticity,dowell2022modern}. A reduced-order description
meant to be useful after flutter onset therefore has to describe motion
\emph{near} an attracting periodic orbit instead of near an equilibrium.

Such a description needs two objects: a reduced map that advances the reduced
state, and a reduction map that sends a full state to its reduced coordinate.
For a trajectory already on the attractor the second object is almost trivial.
For a state displaced off the attractor it is not, and it is the reduction map
that decides whether the reduced description remains closed: full states
carrying the same reduced coordinate must have reduced futures that agree, or
the reduced map cannot be a function of the reduced state alone.

Spectral submanifolds and their parameterisations supply the first object,
providing low-dimensional invariant sets that carry the slow dynamics
\cite{haller2016ssm,cabre2003parameterization1,cenedese2022datadriven,jain2024ssmtool};
they describe motion \emph{on} a manifold. The complementary object, an
invariant foliation, partitions a neighbourhood into leaves that the flow maps
into one another, so that states on a common leaf share an asymptotic reduced
future \cite{szalai2020isf,szalai2023rom,hirsch1977invariant}. It therefore
defines the quotient that a reduction map should realise.

Near a stable hyperbolic periodic orbit both objects are classical, as is the
relation between them. The leaves of the isochron foliation are the level sets
of the asymptotic phase, and the isostable coordinates complete them in the
transverse directions; together they form the phase--isostable coordinates of
the limit cycle. Their linearisation, which we call the linear or
adjoint-Floquet chart, is read directly from the adjoint Floquet modes: the
phase sensitivity is the infinitesimal phase response curve obtained from the
adjoint equation \cite{brown2004phase}, and the amplitude coordinates follow
from the Floquet normal form \cite{wilson2016isostable}. Phase and isostable
coordinates admit a Koopman-eigenfunction representation and extend away from
the orbit \cite{mauroy2013isostables,shirasaka2017transient}: the complex phase
observable $e^{i\theta}$ is a Koopman eigenfunction, while the retained complex
amplitude coordinate is associated with the corresponding Floquet exponent.
They are unique under
suitable regularity and spectral conditions, up to the usual normalisation and
coordinate freedoms \cite{kvalheim2021existence}, and they can be computed to
high order by the parameterization method \cite{perezcervera2020global} or
fitted from data \cite{wilson2020datadriven,ahmed2023isostable}; invariant
foliations more generally can also be fitted to trajectory data
\cite{szalai2025datadriven}. The projection defining the linear chart is
oblique. It discards the fast Floquet direction, which is tangent at the orbit
to the local strong-stable fibre and is not orthogonal to the slow bundle in any
metric chosen for convenience.

How an off-manifold state should be assigned its reduced counterpart is an old
question. It arose with centre- and slow-manifold models, which need initial
conditions consistent with the full dynamics. Roberts treated it explicitly,
determining the projection of initial conditions along what he termed
isochronic manifolds and constructing it by normal forms and by computer
algebra, including near a Hopf bifurcation
\cite{roberts1989appropriate,cox1995initial,roberts2000computer,roberts2015emergent};
for limit cycles the corresponding leaves are the isochrons
\cite{guckenheimer1975isochrons}. Recent spectral-submanifold work has returned
to the issue in data-driven reduction at equilibria, showing that normal
projection mis-pairs off-manifold states with their reduced counterparts and
fitting an oblique projection as a practical surrogate for the stable foliation
\cite{bettini2025oblique,bettini2026general}.

What is not established is how much that distinction costs for a nonlinear
aeroelastic limit cycle, where a plausible metric-normal complement is easy to
adopt and its consequences are invisible on the orbit itself. Nor is it known
what a learned nonlinear correction adds once the linear chart is in place. The
linear construction is exact only to first order, so a learned term could
matter as soon as the excursion is large enough for leaf curvature to be felt;
equally, the linear chart may already suffice within the amplitude range in
which a three-dimensional description is useful at all. Answering the second
question requires holding the reduced dynamics fixed while only the coordinate
varies, and it requires a figure of merit that does not merely reflect the
units in which a reduced coordinate happens to be returned. That second
requirement turned out to be the harder one, and it is why the two questions
separate so sharply below.

The contributions are as follows.

\begin{enumerate}
\item An explicit local statement (Proposition~\ref{prop:firstorder}) of a
  mechanism that follows from established theory. A reduced coordinate whose
  linearisation intertwines the variational flow with the linearised reduced
  map leaves an $O(\delta^{2})$ residual in the invariance relation, whereas a
  projection along a complement that is not the invariant discarded direction
  generically leaves an $O(\delta)$ term, whose coefficient is given in closed
  form for the metric-normal complement. The statement is local, assumes the
  stated smoothness, and is explicit about the accidental cancellations that
  would make the linear coefficient vanish.
\item A quantification of the discrepancy on the nonlinear aeroelastic limit
  cycle: how far the metric-orthogonal complement lies from the fast Floquet
  direction, and how much retained phase and slow-amplitude content a
  metric-normal displacement therefore carries. Both retained components are
  present; which one dominates a measured residual depends on the coordinate
  normalisation and is not a property of the geometry alone.
\item A controlled numerical test of the predicted scaling on held-out
  trajectories, specified before those data were opened. The adjoint-Floquet
  construction itself is classical
  \cite{wilson2016isostable,brown2004phase}; what is tested is its consequence
  for this system.
\item An identifiability audit of the higher-order question. We construct
  learned corrections whose linear part is pinned to the classical chart, whose
  contribution begins structurally at second order, which respect the discrete
  symmetry exactly and which share a frozen reduced flow. Their interpretation
  depends materially on the latent error metric, on a validation-fitted
  amplitude recalibration, on the reference chart used to encode future states,
  on phase sampling and on the anchor convention, and the study identifies
  neither a robust higher-order benefit nor a robust higher-order null.
\end{enumerate}

The evidence is confined to one benchmark at one operating point over one
family of transverse perturbations; Section~\ref{sec:limitations} sets out what
this allows us to conclude.

%% file: sections/02_theory.tex
\section{Reduction geometry near a stable limit cycle}
\label{sec:theory}

\subsection{Dynamical consistency and first-order semiconjugacy}
\label{sec:semiconj}

Let $\dot{x}=F(x)$, $x\in\mathbb{R}^{n}$, have flow $\varphi^{t}$ and possess an
attracting hyperbolic limit cycle $\gamma$ of period $T$, parameterised by phase
$\theta\in S^{1}$ with $\gamma(\theta+2\pi)=\gamma(\theta)$ and
$\gamma'(\theta)=F(\gamma(\theta))/\omega$, $\omega=2\pi/T$, so that
$\varphi^{t}(\gamma(\theta))=\gamma(\theta+\omega t)$. Write $\Psi(t)$ for the
fundamental solution of the variational equation
$\dot{\Psi}=DF(\gamma(\omega t))\Psi$, $\Psi(0)=I$, whose monodromy matrix
$\Psi(T)$ has the Floquet multipliers as eigenvalues, and
\begin{equation}
A_{\tau}(\theta)=D\varphi^{\tau}\bigl(\gamma(\theta)\bigr)
 =\Psi\!\left(\tfrac{\theta}{\omega}+\tau\right)
  \Psi\!\left(\tfrac{\theta}{\omega}\right)^{-1}
\label{eq:propagator}
\end{equation}
for the variational propagator over time $\tau$ from the orbit point of phase
$\theta$.

A reduced coordinate, or reduction map, is a map $u:\mathcal{U}\to Z$ from a
neighbourhood $\mathcal{U}$ of $\gamma$ to a reduced state space $Z$, dynamically consistent
for the time-$\tau$ map $\varphi^{\tau}$ if there is a reduced map $r_{\tau}$
with
\begin{equation}
u\bigl(\varphi^{\tau}(x)\bigr)=r_{\tau}\bigl(u(x)\bigr)
\qquad\text{for all } x\in\mathcal{U}.
\label{eq:invariance}
\end{equation}
Equation~\eqref{eq:invariance} is the defining property of an invariant
foliation: its level sets are the leaves, and the flow maps leaves onto leaves
\cite{szalai2020isf,hirsch1977invariant}. For a stable hyperbolic limit cycle
the classical instance is the pair of isochron and isostable foliations, whose
leaves are level sets of the asymptotic phase and of the slowly decaying
Koopman eigenfunctions
\cite{guckenheimer1975isochrons,mauroy2013isostables,kvalheim2021existence}.

Three quantities are easily conflated, and the distinction governs what the
experiments below can and cannot decide.

\begin{enumerate}
\item The \emph{coordinate-assignment error}: the discrepancy between $u(x)$
  and the exact nonlinear phase--isostable coordinate of $x$. Measuring it
  requires an independently computed reference coordinate.
\item The \emph{semiconjugacy residual}
  $R^{\tau}_{u}(x)=u(\varphi^{\tau}(x))-r_{\tau}(u(x))$, the defect in
  \eqref{eq:invariance}; evaluated over one sampling interval it is the
  one-step invariance residual used in the experiments. It needs no reference
  coordinate, but it is measured in whatever units $u$ itself returns.
\item The \emph{finite-horizon prediction error}, obtained by iterating
  $r_{\tau}$, which depends additionally on the chart in which the future
  state is expressed.
\end{enumerate}

Only the second and third are observable in this study. Both depend on how the
reduced coordinate is normalised (Section~\ref{sec:gauge}), and
Section~\ref{sec:discussion} returns to what follows from that.

The following proposition makes the local mechanism explicit in general form.

\begin{proposition}[First-order semiconjugacy defect]
\label{prop:firstorder}
Let $F$ be $C^{2}$ on a neighbourhood of $\gamma$, let $u$ be $C^{2}$ and
$r_{\tau}$ be $C^{2}$ on a neighbourhood of $u(\gamma)$, and suppose
\eqref{eq:invariance} holds on the orbit, so that
$R^{\tau}_{u}(\gamma(\theta))=0$ for every $\theta$. Write
$P(\theta)=Du(\gamma(\theta))$ and
$R_{\tau}(\theta)=Dr_{\tau}(u(\gamma(\theta)))$, fix $\theta$, and put
$x_{\delta}=\gamma(\theta)+\delta v$ for a unit vector $v$. Then
\begin{equation}
\begin{gathered}
R^{\tau}_{u}(x_{\delta})=\delta\,K_{\tau}(\theta)\,v+O(\delta^{2}),
\eqbreak
K_{\tau}(\theta)=P(\theta+\omega\tau)\,A_{\tau}(\theta)-R_{\tau}(\theta)\,P(\theta),
\end{gathered}
\label{eq:firstordercoeff}
\end{equation}
with the remainder locally uniform in $\theta$ on the compact orbit. If $u$
satisfies the linearised semiconjugacy relation
\begin{equation}
P(\theta+\omega\tau)\,A_{\tau}(\theta)=R_{\tau}(\theta)\,P(\theta)
\qquad\text{for every }\theta,
\label{eq:semiconj}
\end{equation}
then $R^{\tau}_{u}(x_{\delta})=O(\delta^{2})$. Otherwise the residual is exactly
first order in every direction that $K_{\tau}(\theta)$ does not annihilate: if
$K_{\tau}(\theta)v\neq0$ then
\begin{equation}
\bigl\lVert R^{\tau}_{u}(x_{\delta})\bigr\rVert=\Theta(|\delta|),
\label{eq:firstorderrate}
\end{equation}
while if $K_{\tau}(\theta)v=0$ an accidental directional cancellation occurs
and the proposition makes no first-order claim for that direction.
\end{proposition}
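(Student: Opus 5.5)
The plan is a direct first-order Taylor expansion of the residual map $G(x)=u(\varphi^{\tau}(x))-r_{\tau}(u(x))$ about $\gamma(\theta)$, followed by a uniformity argument on the compact orbit.

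\textbf{Regularity and the value on the orbit.} Since $F$ is $C^{2}$, the time-$\tau$ map $\varphi^{\tau}$ is $C^{2}$ on a neighbourhood of $\gamma$. For fixed $\tau$ this holds on a tubular neighbourhood of the compact orbit, by continuous dependence on initial data. Because $\varphi^{\tau}(\gamma)=\gamma$ and $u,r_{\tau}$ are $C^{2}$ near $\gamma$ and $u(\gamma)$, the composite $G$ is $C^{2}$ on a neighbourhood $\mathcal{V}$ of $\gamma$. Shrinking $\mathcal{V}$ if necessary, we may take it to contain a closed tube $\{\gamma(\theta)+w:\lVert w\rVert\le\delta_{0}\}$. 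By hypothesis $G(\gamma(\theta))=0$.

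\textbf{The linear coefficient.} The chain rule gives
\[
DG(\gamma(\theta))=Du\bigl(\varphi^{\tau}(\gamma(\theta))\bigr)\,D\varphi^{\tau}(\gamma(\theta))-Dr_{\tau}\bigl(u(\gamma(\theta))\bigr)\,Du(\gamma(\theta)).
\]
Using $\varphi^{\tau}(\gamma(\theta))=\gamma(\theta+\omega\tau)$ and the definition \eqref{eq:propagator} of $A_{\tau}(\theta)$, this is exactly $K_{\tau}(\theta)$.

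\textbf{The remainder.} Taylor's theorem with integral remainder gives
\[
G(x_{\delta})=\delta K_{\tau}(\theta)v+\delta^{2}\int_{0}^{1}(1-s)\,D^{2}G\bigl(\gamma(\theta)+s\delta v\bigr)[v,v]\,ds .
\]
For $|\delta|\le\delta_{0}$ the integrand stays in the compact tube. There $D^{2}G$ is continuous and hence bounded by some $M$, so the remainder is at most $\tfrac12 M\delta^{2}$, uniformly in $\theta$ and in unit $v$. This gives \eqref{eq:firstordercoeff} with a locally uniform remainder.

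\textbf{The consequences.} If \eqref{eq:semiconj} holds, then $K_{\tau}\equiv0$ and the residual is $O(\delta^{2})$. If instead $c=\lVert K_{\tau}(\theta)v\rVert>0$, the triangle inequality gives, for $|\delta|\le\min(\delta_{0},c/M)$,
\[
\tfrac{c}{2}|\delta|\le c|\delta|-\tfrac{M}{2}\delta^{2}\le\lVert R^{\tau}_{u}(x_{\delta})\rVert\le c|\delta|+\tfrac{M}{2}\delta^{2}\le\tfrac{3c}{2}|\delta|,
\]
which is \eqref{eq:firstorderrate}. When $K_{\tau}(\theta)v=0$, nothing beyond $O(\delta^{2})$ is asserted, so there is nothing to prove.

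\textbf{Main obstacle.} The only delicate point is the uniformity claim. It requires a single neighbourhood on which $\varphi^{\tau}$, and hence $G$, is $C^{2}$ with bounded second derivative. I would obtain this from $C^{2}$ dependence of the flow on initial conditions together with compactness of $\gamma$ and of the unit sphere. The $\Theta$-constants in \eqref{eq:firstorderrate} are uniform only on sets where $\lVert K_{\tau}(\theta)v\rVert$ is bounded away from zero, and I would state that restriction explicitly.
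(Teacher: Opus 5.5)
Your proof is correct and follows the paper's argument. Both use $C^{2}$ dependence of the time-$\tau$ flow on initial data, a Taylor expansion about $\gamma(\theta)$ in which the zeroth-order term cancels because the residual vanishes on the orbit, the chain rule with $\varphi^{\tau}(\gamma(\theta))=\gamma(\theta+\omega\tau)$ to identify $K_{\tau}(\theta)$, compactness of a tube about $\gamma$ for uniformity in $\theta$, and domination of the remainder by the nonzero linear term. Expanding the difference $G$ directly instead of each composite separately is immaterial. Your explicit constants simply make precise what the paper states qualitatively: the bound $\tfrac12 M\delta^{2}$ on the remainder, and $\tfrac{c}{2}|\delta|\le\lVert R^{\tau}_{u}(x_{\delta})\rVert\le\tfrac{3c}{2}|\delta|$ for $|\delta|\le\min(\delta_{0},c/M)$. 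So does your remark that the $\Theta$-constants are uniform only where $\lVert K_{\tau}(\theta)v\rVert$ is bounded away from zero.
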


\begin{proof}
Both $x\mapsto u(\varphi^{\tau}(x))$ and $x\mapsto r_{\tau}(u(x))$ are $C^{2}$
near $\gamma(\theta)$: the time-$\tau$ flow of a $C^{2}$ vector field is $C^{2}$
in its initial condition, and the compositions are $C^{2}$ by the chain rule,
given $u,r_{\tau}\in C^{2}$. Their second derivatives are therefore continuous,
hence bounded on a compact neighbourhood of $\gamma$, and Taylor's theorem with
remainder gives an expansion of each about $\gamma(\theta)$ along $v$ whose
remainder is $O(\delta^{2})$ with a constant controlled by that uniform bound;
the compactness of $\gamma$ makes the bound, and so the remainder, uniform in
$\theta$. Subtracting the two expansions gives \eqref{eq:firstordercoeff}: the
zeroth-order terms cancel because the residual vanishes on the orbit, and the
chain rule with $\varphi^{\tau}(\gamma(\theta))=\gamma(\theta+\omega\tau)$ gives
the form of $K_{\tau}$. If $K_{\tau}(\theta)$ vanishes identically, which is
what \eqref{eq:semiconj} asserts, the linear term is absent and only the
$O(\delta^{2})$ remainder survives. If instead $K_{\tau}(\theta)v\neq0$, the
linear term dominates the remainder for all sufficiently small $|\delta|$,
which gives \eqref{eq:firstorderrate}.
\end{proof}

Its centre-manifold counterpart, that projecting initial conditions along the
isochron tangents leaves an error of second order in the distance from the
manifold, is Theorem~12.4 of \cite{roberts2015emergent}; the proposition states
the corresponding mechanism for the semiconjugacy residual near a periodic
orbit. It is local in $\delta$ and says nothing about states outside the
neighbourhood on which the expansion holds. Because the phase coordinate lives
on $S^{1}$, the Taylor expansions and the subtraction defining
$R^{\tau}_{u}$ are to be read in a local lift of that circle to $\mathbb{R}$,
which is legitimate precisely because the statement is local in $\delta$: for
small enough displacement all the phases involved lie in one chart, and the
residual is the difference taken there. The reading of the second case also
needs care. Failure of \eqref{eq:semiconj} makes the residual first order
generically rather than always, since a particular displacement direction may
lie in the kernel of $K_{\tau}(\theta)$; where it does not,
\eqref{eq:firstorderrate} gives an exact first-order rate rather than only an
upper bound.

\subsection{Floquet geometry and phase--isostable coordinates}
\label{sec:firstorder}

Specialising, suppose the multipliers split into a slow group retained and a
fast group discarded. For the benchmark below the split is one neutral
direction, one stable complex pair and one faster stable real direction, so the
slow bundle is three-dimensional and its complement one-dimensional. Let
$\lambda_{c}$ be the exponent of the retained complex pair with eigenvector
$z_{c}$ of $\Psi(T)$, and $\lambda_{f}$, $w_{f}(0)$ those of the discarded real
mode. Propagating with the exponential factor removed gives $T$-periodic
Floquet modes,
\begin{equation}
\begin{gathered}
v_{c}(\theta)=\Psi\!\left(\tfrac{\theta}{\omega}\right)z_{c}\,
  e^{-\lambda_{c}\theta/\omega},
\eqbreak
w_{f}(\theta)=\Psi\!\left(\tfrac{\theta}{\omega}\right)w_{f}(0)\,
  e^{-\lambda_{f}\theta/\omega},
\end{gathered}
\label{eq:floquetmodes}
\end{equation}
and the real primal Floquet frame
\begin{equation}
B_{F}(\theta)=\bigl[\,\gamma'(\theta)\;\;
  \mathrm{Re}\,v_{c}(\theta)\;\;\mathrm{Im}\,v_{c}(\theta)\;\;
  w_{f}(\theta)\,\bigr].
\label{eq:frame}
\end{equation}
The first three columns span the retained slow bundle, denoted $E^{s}(\theta)$,
where the superscript $s$ denotes slow rather than stable (the bundle contains
the neutral phase direction), and the fourth the fast Floquet subbundle. Under the usual smoothness and exponential-separation
assumptions for the local strong-stable foliation of a hyperbolic periodic
orbit \cite{hirsch1977invariant,guckenheimer1975isochrons}, that subbundle
integrates locally to strong-stable fibres, so $w_{f}(\theta)$ is tangent at
$\gamma(\theta)$ to the fibre through that point, the curved invariant leaf along
which nearby states approach the orbit fastest. The first-order construction
below projects along this tangent, the fast Floquet direction, and does not
reconstruct the nonlinear fibre away from the orbit. Inverting the primal frame gives the
dual basis $B_{F}^{-1}(\theta)$, whose rows are the corresponding adjoint
Floquet covectors. The variational flow carries the frame into itself,
\begin{equation}
\begin{gathered}
A_{\tau}(\theta)\,B_{F}(\theta)=B_{F}(\theta+\omega\tau)\,\Lambda_{\tau},
\eqbreak
\Lambda_{\tau}=\mathrm{diag}\bigl(\Lambda^{s}_{\tau},\,e^{\lambda_{f}\tau}\bigr),
\end{gathered}
\label{eq:floquetcocycle}
\end{equation}
where $\Lambda^{s}_{\tau}$ is the linear part of the block-diagonal reduced map
of Section~\ref{sec:reducedflow}. Taking the first three rows of
$B_{F}^{-1}(\theta+\omega\tau)A_{\tau}(\theta)=\Lambda_{\tau}B_{F}^{-1}(\theta)$
shows that $P_{F}(\theta)$, the first three rows of $B_{F}^{-1}(\theta)$,
satisfies \eqref{eq:semiconj} with $R_{\tau}=\Lambda^{s}_{\tau}$, so
Proposition~\ref{prop:firstorder} gives an $O(\delta^{2})$ residual for a
reduced coordinate with that linearisation. We call such a map the
\emph{adjoint-Floquet projection} $u_{\mathrm{aF}}$: it is the linearised
phase--isostable chart of the limit cycle, in which the phase row of
$B_{F}^{-1}$ is the infinitesimal phase response curve obtained from the
adjoint equation \cite{brown2004phase} and the remaining rows are the linear
isostable coordinates of the retained Floquet pair \cite{wilson2016isostable}.
We use it as established theory. The projection is oblique: $w_{f}$ is in
general not orthogonal to $E^{s}$ in any metric chosen for convenience.

Given a positive-definite metric $M$ one may instead form the $M$-orthogonal,
or metric-normal, complement $v_{\perp}(\theta)$ of $E^{s}(\theta)$ and project
along that. Both
constructions annihilate a one-dimensional direction and both are exact on
$\gamma$, so the substitution is easy to make and hard to detect on the orbit.
Decomposing a unit $v_{\perp}$ in the frame \eqref{eq:frame},
\begin{equation}
\begin{gathered}
v_{\perp}(\theta)=c_{\theta}(\theta)\gamma'(\theta)
  +c_{a}(\theta)\mathrm{Re}\,v_{c}(\theta)
\eqsplit
  +c_{b}(\theta)\mathrm{Im}\,v_{c}(\theta)
  +c_{f}(\theta)w_{f}(\theta),
\end{gathered}
\label{eq:vperpdecomp}
\end{equation}
the retained coefficients $\hat{c}=(c_{\theta},c_{a},c_{b})$ vanish only if the
two complements coincide. The map $u_{\mathrm{orth}}$ that annihilates
$v_{\perp}$ has a linearisation $P_{\perp}(\theta)$ on $\gamma$ that agrees with
$P_{F}(\theta)$ on $E^{s}(\theta)$ and has kernel $v_{\perp}(\theta)$. Along the
direction tested below, \eqref{eq:floquetcocycle} then gives
\begin{equation}
\begin{gathered}
K^{\perp}_{\tau}(\theta)\,v_{\perp}(\theta)
 =\Lambda^{s}_{\tau}\,\hat{c}(\theta)
\eqsplit
  -e^{\lambda_{f}\tau}\,\frac{c_{f}(\theta)}{c_{f}(\theta+\omega\tau)}\,
   \hat{c}(\theta+\omega\tau),
\end{gathered}
\label{eq:orthcoeff}
\end{equation}
the retained content propagated by the slow dynamics, less the retained content
that $u_{\mathrm{orth}}$ assigns to the propagated fast component at the
advanced phase. It vanishes when $v_{\perp}\parallel w_{f}$ and otherwise only
where three scalar conditions happen to hold, so the metric-normal map
generically leaves an $O(\delta)$ residual. Along $v_{\perp}$ the residual of
$u_{\mathrm{orth}}$ is therefore predicted to scale with exponent one and that
of $u_{\mathrm{aF}}$ with exponent two, which is the contrast
Section~\ref{sec:geometry} tests.

Two features of \eqref{eq:vperpdecomp} matter for what follows. First, the
retained content is in general \emph{both} a phase component and a
slow-amplitude component: $c_{\theta}$ is the infinitesimal phase response
curve evaluated along $v_{\perp}$, so states at $\pm\delta v_{\perp}$ lie on
different isochrons, while $(c_{a},c_{b})$ displaces the retained complex
amplitude. There is accordingly no general theorem that the leading defect for
a periodic orbit is a phase error, and we do not claim one; what is specific to
a limit cycle is only that one retained direction is neutral, so the phase part
of a mis-assignment is not contracted by the linear flow. Second, which
retained component dominates a measured residual is normalisation dependent:
$c_{\theta}$, $c_{a}$ and $c_{b}$ multiply basis vectors with different
physical normalisations and cannot be ranked by raw magnitude alone, and the
residual of Section~\ref{sec:metrics} weights them through a particular choice
of phase and amplitude units stated there. The mechanism itself is the
periodic-orbit counterpart of the mis-pairing that normal projection produces
for slow spectral submanifolds at equilibria \cite{bettini2025oblique}.

\paragraph{A two-dimensional illustration.}
The misalignment is not peculiar to the benchmark. In the Hopf normal form
\begin{equation}
\dot{z}=(\sigma+i\nu)\,z-g\,(1-i\eta)\,\lvert z\rvert^{2}z,
\qquad \sigma,g>0,
\label{eq:hopfnf}
\end{equation}
the polar form $\dot{r}=\sigma r-gr^{3}$, $\dot{\psi}=\nu+\eta g r^{2}$ of
$z=re^{i\psi}$ has the attracting cycle $r_{\ast}=(\sigma/g)^{1/2}$ with frequency
$\nu+\eta\sigma$, and
\begin{equation}
\vartheta(r,\psi)=\psi+\eta\ln(r/r_{\ast})
\label{eq:hopfphase}
\end{equation}
is its asymptotic phase, since $\dot{\vartheta}=\nu+\eta\sigma$ everywhere; the
isochrons are logarithmic spirals. At the cycle the fast Floquet exponent is
$-2\sigma$ and its eigendirection, which is also the isochron tangent, is
$e_{r}-\eta\,e_{\psi}$ in the polar unit vectors, whereas the Euclidean normal to
the circle is $e_{r}$. The two make an angle $\beta$ with
$\tan\beta=\lvert\eta\rvert$. In this normal-form parametrisation the angle is
independent of $\sigma$ and $\nu$: it is set by the amplitude dependence of the
frequency, not by the distance from the bifurcation or the relaxation rate.
Nearest-point projection therefore assigns a state at
radius $r_{\ast}+\delta$ a phase error $\eta\,\delta/r_{\ast}+O(\delta^{2})$ that
the neutral phase dynamics never removes, the persistent phase discrepancy that
a poor initial condition produces near a Hopf bifurcation
\cite{roberts2000computer}. Cubic stiffness of the kind in \eqref{eq:eom} can
produce an amplitude-dependent oscillation frequency, which provides the
qualitative analogue of $\eta\neq0$; we do not compute normal-form coefficients
for the benchmark, and the example is illustrative only.

\subsection{Higher-order correction}

Beyond first order the leaves of a genuine invariant foliation are curved and
the linear chart is no longer exact. The exact nonlinear phase--isostable
coordinates can be computed to high order by the parameterization method for
systems given in closed form \cite{perezcervera2020global} and have been fitted
from data \cite{wilson2020datadriven,ahmed2023isostable}, as have invariant
foliations more generally \cite{szalai2023rom}. The object to learn is the
remainder
\begin{equation}
\begin{gathered}
u_{\mathrm{IF}}(x)=u_{\mathrm{aF}}(x)+\Delta u(x),
\eqbreak
\Delta u(x)=O\!\left(\lVert x-\gamma(\theta^{\ast})\rVert^{2}\right),
\end{gathered}
\label{eq:correction}
\end{equation}
with $\theta^{\ast}$ the anchor phase, the phase of the reference point on
$\gamma$ assigned to $x$ (Section~\ref{sec:methods}). What matters for the
comparison is that
$\Delta u$ vanish together with its first derivative on $\gamma$, so that the
correction cannot repair or silently replace a wrong linear projection, and
that $u_{\mathrm{IF}}$ inherit the on-orbit fibre condition
$Du(\gamma(\theta))\,w_{f}(\theta)=0$ from the base chart, so that the two maps
differ only in curvature; Section~\ref{sec:methods} enforces both structurally.
Both are statements on the orbit. They pin the linearisation of the learned map
to $Du_{\mathrm{aF}}$ there, and they do not assert that the network
reconstructs an exact nonlinear invariant fibre through every nearby state. We
call the learned object a \emph{learned higher-order phase--isostable
correction}, and $u_{\mathrm{IF}}$, the subscript standing for invariant
foliation, an approximate data-driven quotient coordinate.

%% file: sections/03_methods.tex
\section{Aeroelastic benchmark and methodology}
\label{sec:methodology}

\subsection{Benchmark and operating point}
\label{sec:benchmark}

\paragraph{Equations and parameters.}
We use the isolated single-airfoil subsystem introduced in Section~2 of Nitti
et al. \cite{nitti2021localized} as the building block of their rotor study,
taken here as a stand-alone benchmark rather than as a model of the coupled
rotor. It is a two-degree-of-freedom typical section with plunge
$h$ (positive downward) and pitch $\alpha$ (positive nose-up) under
quasi-steady aerodynamics, with cubic stiffness in both degrees of freedom.
With $q=[h,\alpha]^{\top}$ and nondimensional time $t=t_{\rm dim}\omega_\alpha$,
where $\omega_{\alpha}$ is the reference frequency of the source,
\begin{equation}
\mathcal{M}\ddot{q}+\mathcal{C}(V)\dot{q}+\mathcal{K}_{1}(V)q+\mathcal{K}_{3}q^{\circ 3}=0,
\label{eq:eom}
\end{equation}
where $q^{\circ 3}$ is the componentwise cube and
\begin{equation}
\begin{gathered}
\mathcal{M}=\begin{bmatrix}1&\varepsilon\\ \varepsilon&r^{2}\end{bmatrix},
\eqbreakq
\mathcal{C}=\begin{bmatrix}
 \mu_{h}+\xi_{u}V & \xi_{u}V(\tfrac12-a)\\[2pt]
 -\xi_{u}V(\tfrac12-a) & \mu_{\alpha}-\xi_{u}V(\tfrac14-a^{2})\end{bmatrix},
\eqbreakq
\mathcal{K}_{1}=\begin{bmatrix}
 \xi_{h0} & \xi_{u}V^{2}\\[2pt]
 0 & \xi_{\alpha0}-\xi_{u}V^{2}(\tfrac12+a)\end{bmatrix},
\end{gathered}
\label{eq:matrices}
\end{equation}
with $\mathcal{K}_{3}=\mathrm{diag}(\xi_{h3},\xi_{\alpha3})$. The state is
$x=[h,\alpha,\dot{h},\dot{\alpha}]^{\top}\in\mathbb{R}^{4}$ and $V$ is the
reduced velocity. The quasi-steady effective-angle closure of the source
carries no lag states, so \eqref{eq:eom} is a four-dimensional autonomous
system. The matrices are those of the source's matrix form (its Eq.~(11)); the
$(2,1)$ entry of $\mathcal{C}$ implied by its scalar equations of motion (its
Eq.~(9)) has $\tfrac12+a$ in place of $\tfrac12-a$, and we use the matrix form
throughout. Parameter values follow Table~1 of \cite{nitti2021localized},
except the cubic plunge coefficient, which is set to
$\xi_{h3}=\CubicPlungeStiffness$, one of the values in the source's study of
that coefficient, in place of the base value of its Table~1. The values and
the symbols in \eqref{eq:matrices} are listed in Table~\ref{tab:parameters}.

\begin{table}[htbp]
\centering
\caption{Benchmark parameters and operating point. Structural and aerodynamic
values are those of Table~1 of \cite{nitti2021localized}, except $\xi_{h3}$,
which takes one of the values of the source's study of the cubic plunge
coefficient in place of the Table-1 base value; the remaining quantities are
computed from \eqref{eq:eom} and recorded with the computational provenance of
each run.}
\label{tab:parameters}
\begin{tabular}{llr}
\toprule
Symbol & Meaning & Value \\
\midrule
$\varepsilon$      & static unbalance                 & $0.25$ \\
$r^{2}$            & squared radius of gyration        & $0.5$ \\
$a$                & elastic-axis offset               & $-0.1$ \\
$\xi_{u}$          & aerodynamic coefficient           & $0.0113$ \\
$\mu_{h},\mu_{\alpha}$ & structural damping            & $0.1$ \\
$\xi_{h0}$         & linear plunge stiffness           & $0.2$ \\
$\xi_{\alpha0}$    & linear pitch stiffness            & $0.8$ \\
$\xi_{h3}$         & cubic plunge stiffness            & $\CubicPlungeStiffness$ \\
$\xi_{\alpha3}$    & cubic pitch stiffness             & $20$ \\
\midrule
$V_{f}$            & linear flutter velocity           & $\FlutterVelocity$ \\
$\omega_{f}$       & flutter angular frequency         & $\FlutterFrequency$ \\
$V$                & operating point                   & $\OperatingVelocity$ \\
$T$                & limit-cycle period                & $\LcoPeriod$ \\
$\omega$           & limit-cycle angular frequency     & $\LcoAngularFrequency$ \\
\bottomrule
\end{tabular}
\end{table}

The purpose is methodological. This is close to the smallest system containing
all the structure the question needs: a neutral phase direction, a stable
complex slow pair, and a separate faster real Floquet direction that is not
metric-normal to the slow bundle. The reduction
from four states to three isolates the choice of discarded direction and
settles nothing about model order reduction at scale.

The structural stiffness and mass blocks also supply the state metric used
throughout,
\begin{equation}
\begin{gathered}
M_{s}=\mathrm{blockdiag}\bigl(\mathrm{diag}(\xi_{h0},\xi_{\alpha0}),\,\mathcal{M}\bigr),
\eqbreak
\lVert \mathrm{d}x\rVert_{M_{s}}^{2}
 =\mathrm{d}q^{\top}\mathrm{diag}(\xi_{h0},\xi_{\alpha0})\,\mathrm{d}q
\eqsplit
 +\mathrm{d}\dot{q}^{\top}\mathcal{M}\,\mathrm{d}\dot{q},
\end{gathered}
\label{eq:metric}
\end{equation}
a dimensionless mechanical-energy form that keeps displacements and velocities
in separate blocks. It is used for the nearest-phase search, for whitening the
least-squares solves, that is, solving them in coordinates transformed by a
Cholesky factor of $M_{s}$, and for normalising perturbation amplitudes. It is
not used to define the discarded direction; that is the distinction developed
in Section~\ref{sec:theory}.

\paragraph{Validation and choice of operating point.}
Linear analysis of \eqref{eq:eom} gives a Hopf crossing at
$V_{f}=\FlutterVelocity$ with nondimensional angular frequency
$\omega_{f}=\FlutterFrequency$, against the approximate $V\approx7.6$ reported
at the resolution of the source's velocity sweep; the difference of
$\FlutterOnsetDiscrepancy$ lies inside the tolerance declared before the
comparison was run. All time integration uses the classical fourth-order
Runge--Kutta (RK4) method at fixed step. At the tested velocity below the
crossing both initial conditions quoted in the source decay to numerical zero,
and at the four tested velocities above it they converge to the same periodic
response. Reducing the integration step from $\StepCoarse$ through
$\IntegrationStep$ to $\StepFine$ changes the root-mean-square (RMS) plunge
response at $V=10$ by at most $\StepConvergenceDeviation$ relative to the
finest step.

The reported experiments use $V=\OperatingVelocity$, chosen from a scan of the
post-flutter range before any reduction was attempted. There the limit cycle is
stable and isolated and the transverse spectrum separates cleanly into one
complex pair and one faster real mode. At the higher velocities of the scan the
response becomes modulated, which would confound the comparison, and at the
neighbouring stable points the separation is less clean: at the lower one the
real mode is comparable in modulus to the complex pair, and at the higher one
the complex pair lies close to the unit circle. The separation at
$V=\OperatingVelocity$ is finite rather than extreme, which matters in both
directions: the discarded mode decays quickly enough that a three-dimensional
description is meaningful, but not so quickly that off-manifold transients pass
unobserved between samples.

\paragraph{Floquet structure.}
Shooting with a variational integrator gives a refined periodic orbit of period
$T=\LcoPeriod$, hence $\omega=\LcoAngularFrequency$
(Figure~\ref{fig:benchmark}(b)). The multipliers are
\begin{equation}
\begin{gathered}
\mu_{1}=\MultiplierNeutral,\eqbreak
\mu_{2,3}=\MultiplierSlowReal\pm\MultiplierSlowImag\,i
 \;\;(|\mu_{2,3}|=\MultiplierSlowModulus),\eqbreak
\mu_{f}=\MultiplierFast,
\end{gathered}
\label{eq:multipliers}
\end{equation}
shown in Figure~\ref{fig:benchmark}(c); the unit multiplier is reproduced to
$\NeutralMultiplierResidual$. The retained slow bundle is spanned by the phase
tangent and the real and imaginary parts of the complex mode; the discarded
direction corresponds to $\mu_{f}$, which contracts by a factor
$\MultiplierFast$ per period, so the component of a displacement along
$w_{f}$ falls to $\FastDecayFivePeriods$ of its initial size within five
periods. That rate also fixes the useful window for the future-consistency
diagnostic of Section~\ref{sec:canonicalmetric}: after one period the fast
transient is strongly reduced and after two nearly gone, while the slow
amplitude remains measurable. Table~\ref{tab:floquet} collects these
quantities. The frame \eqref{eq:frame} is well conditioned around the orbit,
with whitened condition number below $\ObliqueConditionMax$.

\begin{widefigure}
\centering
\includegraphics[width=\textwidth]{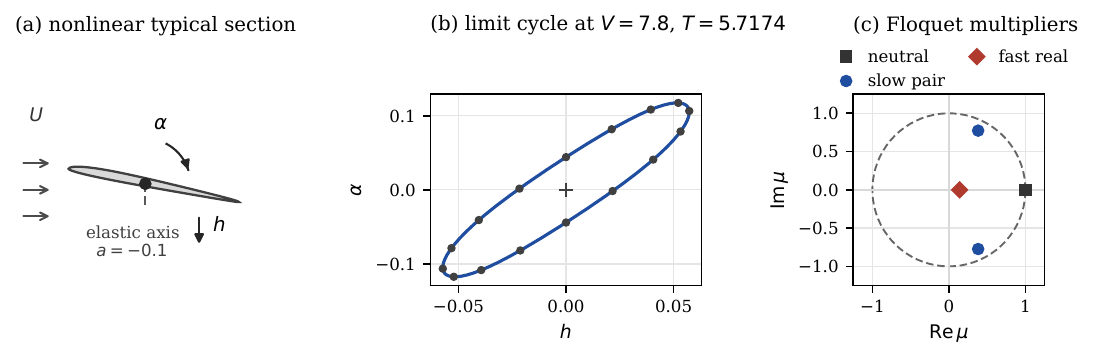}
\caption{Benchmark and operating point. (a) The nonlinear typical section of
\cite{nitti2021localized}: plunge $h$ positive downward, pitch $\alpha$
positive nose-up, elastic axis at $a=-0.1$, cubic stiffness in both degrees of
freedom. (b) The refined limit cycle at $V=\OperatingVelocity$ in the
$(h,\alpha)$ plane, with the $\PhaseCount$ reference phases at which
perturbations are placed. (c) Floquet multipliers relative to the unit circle
(dashed): one neutral, one stable complex pair of modulus
$\MultiplierSlowModulus$, and one faster real mode at $\MultiplierFast$. The
linear flutter velocity is $V_{f}=\FlutterVelocity$
(Table~\ref{tab:parameters}).}
\label{fig:benchmark}
\end{widefigure}

\begin{widetable}
\centering
\caption{Floquet structure at $V=\OperatingVelocity$ and the geometry of the two
candidate discarded directions. Angles and coefficients of $v_{\perp}$ are
measured in the metric \eqref{eq:metric} over $\DecompositionPhases$ phases.}
\label{tab:floquet}
\begin{tabular}{lr}
\toprule
Quantity & Value \\
\midrule
neutral multiplier $\mu_{1}$                       & $\MultiplierNeutral$ \\
slow pair $\mu_{2,3}$                              & $\MultiplierSlowReal\pm\MultiplierSlowImag\,i$ \\
slow modulus $|\mu_{2,3}|$                         & $\MultiplierSlowModulus$ \\
fast multiplier $\mu_{f}$                          & $\MultiplierFast$ \\
fast exponent $\lambda_{f}$                        & $\FastExponent$ \\
\midrule
whitened slow-frame condition number               & $\SlowFrameConditionMax$ \\
whitened full-frame condition number               & $\ObliqueConditionMax$ \\
periodic closure of $w_{f}$ (relative)             & $\FastModeClosure$ \\
\midrule
$\angle_{M}(v_{\perp},w_{f})$, min / median / max  & $\AngleVperpWfMin^{\circ}$ / $\AngleVperpWfMedian^{\circ}$ / $\AngleVperpWfMax^{\circ}$ \\
$|c_{\theta}|$ of unit $v_{\perp}$, median [rad]   & $\VperpPhaseCoefficientMedian$ \\
slow component of unit $v_{\perp}$, median         & $\VperpSlowNormMedian$ \\
fast component of unit $v_{\perp}$, median         & $\VperpFastCoefficientMedian$ \\
\bottomrule
\end{tabular}
\end{widetable}

\subsection{Coordinate maps}
\label{sec:methods}

All four maps below send $x\in\mathbb{R}^{4}$ to
$u=(\theta,a,b)\in S^{1}\times\mathbb{R}^{2}$, where the slow coordinates $a$
and $b$ are unrelated to the elastic-axis offset $a$ of \eqref{eq:matrices}.
They share the metric \eqref{eq:metric}, the phase origin, the amplitude
normalisation $s=\AmplitudeScale$ (a training-set amplitude scale, used in
Section~\ref{sec:metrics}), the reduced flow described below and the
evaluation protocol. Table~\ref{tab:maps} lists their structural properties and
the role each plays.

\paragraph{Anchor phase.}
Every map begins from the same anchor, the phase of the metric-nearest point of
the interpolated orbit,
\begin{equation}
\theta^{\ast}(x)\in\arg\min_{\theta\in S^{1}}
  \lVert x-\gamma(\theta)\rVert_{M_{s}}^{2},
\label{eq:anchor}
\end{equation}
solved on a periodic cubic-Hermite interpolant of $\gamma$ with
$\GridNodesPrimary$ nodes and exact node derivatives, refining every discrete
local minimum by a bracketed Newton iteration on
$\gamma'(\theta)^{\top}M_{s}(\gamma(\theta)-x)=0$. The construction presumes
that the nearest-orbit point is locally unique and depends smoothly on $x$,
which holds on a sufficiently thin tube about $\gamma$ and is checked rather
than assumed: the interpolant is periodic by construction, with a seam residual
of $\SeamClosureResidual$, and on every split each state had exactly one
discrete candidate. Equation~\eqref{eq:anchor}
is a convention, not a theorem, and Section~\ref{sec:conventionlimit} returns
to what that costs at second order. The anchor uses the metric only to select a
point on the orbit; which direction is discarded is a separate choice, and it
is that choice which distinguishes the first two maps.

\paragraph{Metric-orthogonal projection.}
The first construction takes the $M_{s}$-orthogonal complement $v_{\perp}$ of
the slow bundle as the discarded direction and solves the metric least-squares
problem for the slow coefficients,
\begin{equation}
\begin{gathered}
c=\arg\min_{c\in\mathbb{R}^{3}}
  \bigl\lVert B_{s}(\theta^{\ast})c-\bigl(x-\gamma(\theta^{\ast})\bigr)\bigr\rVert_{M_{s}},
\eqbreak
u_{\mathrm{orth}}(x)=\bigl(\theta^{\ast}+c_{\theta},\,c_{a},\,c_{b}\bigr),
\end{gathered}
\label{eq:orth}
\end{equation}
with $B_{s}$ the first three columns of \eqref{eq:frame}, solved by a whitened
QR factorisation. It differs from \eqref{eq:obliquesolve} below in one respect
only, the direction it removes. This is the construction one writes down first
when a metric is available, and it is exact on $\gamma$; we include it because
its failure mode is instructive, not as a competitive baseline. The
corresponding difficulty for slow spectral submanifolds at equilibria is set
out by Bettini et al. \cite{bettini2025oblique}.

\paragraph{Adjoint-Floquet projection.}
The second construction is the linearised phase--isostable chart of
Section~\ref{sec:firstorder}, used here as established theory
\cite{wilson2016isostable,brown2004phase}. It replaces the discarded direction
by the propagated fast Floquet mode $w_{f}$. Using the full frame
\eqref{eq:frame}, which is square and invertible, the decomposition
\begin{equation}
x-\gamma(\theta^{\ast})=B_{F}(\theta^{\ast})\,c,
\qquad c=(c_{\theta},c_{a},c_{b},c_{f}),
\label{eq:obliquesolve}
\end{equation}
is unique, and the reduced coordinate discards the fast coefficient,
\begin{equation}
u_{\mathrm{aF}}(x)=\bigl(\theta^{\ast}+c_{\theta},\,c_{a},\,c_{b}\bigr).
\label{eq:oblique}
\end{equation}
The solve is performed in whitened coordinates for conditioning, but the
whitening does not enter the definition, since \eqref{eq:obliquesolve} has a
unique metric-independent solution. The map has no fitted parameters, and the
frame it needs is a by-product of the Floquet analysis one performs anyway to
establish stability of the limit cycle. Its slow columns are the frozen gauge
used by the learned maps, so phase origin, complex-mode orientation and
amplitude scale are shared exactly. $w_{f}$ is obtained by propagating the
monodromy eigenvector of $\mu_{f}$ with the exponential factor removed, as in
\eqref{eq:floquetmodes}, and interpolating periodically at the same resolution
as $\gamma$ and $v_{c}$; its relative closure residual over one period is
$\FastModeClosure$, and its sign and scale do not affect the slow
coefficients.

\paragraph{Learned higher-order correction.}
\label{sec:correction}
The third and fourth maps add a learned remainder to \eqref{eq:oblique}, a
data-driven stand-in for the higher-order terms of the nonlinear
phase--isostable coordinates that the parameterization method computes for
systems given in closed form \cite{perezcervera2020global} and that other work
fits from data \cite{wilson2020datadriven,ahmed2023isostable}. Here the linear
part is pinned rather than fitted, so the comparison in
Section~\ref{sec:results} isolates the higher-order term. Let
$\xi=(c_{a}/s_{a},\,c_{b}/s_{b},\,c_{f}/s_{f})$ be the adjoint-Floquet
coefficients standardised by their training-set RMS values, written $\xi$ to
keep them distinct from the structural displacement $q=[h,\alpha]^{\top}$ of
\eqref{eq:eom}, and let $\rho^{2}=\lVert\xi\rVert^{2}$. The correction
is
\begin{equation}
\begin{gathered}
u_{\mathrm{IF}}(x)=u_{\mathrm{aF}}(x)+\Delta u(x),
\eqbreak
\Delta u=\tfrac12\bigl[\Delta_{\rm raw}(x)+\Delta_{\rm raw}(-x)\bigr],
\end{gathered}
\label{eq:learned}
\end{equation}
where the raw correction takes one of two forms, which we call Model~A and
Model~B. Model~A applies a common quadratic radial prefactor
$\rho^{2}$ to the network output, whereas Model~B parameterises each quadratic
term $\xi_{i}\xi_{j}$ individually,
\begin{equation}
\begin{gathered}
\text{(A)}\quad \Delta_{\rm raw}=\rho^{2}\,g(\cos\theta,\sin\theta,\xi),
\eqbreak
\text{(B)}\quad \Delta_{{\rm raw},k}=\sum_{i\le j}\xi_{i}\xi_{j}\,
   g_{k,ij}(\cos\theta,\sin\theta,\xi),
\end{gathered}
\label{eq:forms}
\end{equation}
with $g$ a single-hidden-layer network of width eight with hyperbolic-tangent
activation, taking the five inputs $(\cos\theta,\sin\theta,\xi)$ and returning
the three components of $\Delta_{\rm raw}$ for Model~A and the eighteen
coefficients $g_{k,ij}$ for Model~B; here $k$ indexes the component of $u$. In
both architectures every term carries an explicit factor quadratic in $\xi$, so
both corrections are at least quadratic near the orbit. Model~A has
$\AsixParameters$ trainable parameters and Model~B has $\BsixParameters$.

These properties hold by construction, not by penalty, and are verified
numerically in Section~\ref{sec:verification}. The quadratic prefactor forces
$\Delta u=0$ and $D\Delta u=0$ on $\gamma$, so the correction cannot alter the
first-order chart. The learned map $u_{\mathrm{IF}}$ therefore inherits from
$u_{\mathrm{aF}}$ both the pinned first-order gauge and the on-orbit fibre
condition $Du(\gamma(\theta))\,w_{f}(\theta)=0$; neither asserts anything about
an exact nonlinear fibre away from the orbit. Dependence on $\theta$ enters
only through $(\cos\theta,\sin\theta)$, so the map is periodic with no seam,
and the symmetrisation in \eqref{eq:learned} makes it exactly equivariant under
the symmetry \eqref{eq:symmetry} below. The loss contains no collapse
penalty along $v_{\perp}$, and selection uses no transverse-sensitivity
criterion. Both belonged to an earlier formulation and are inconsistent with
Section~\ref{sec:theory}, since they would require the map to identify states
whose asymptotic phases differ.

\paragraph{Discrete symmetry.}
\label{sec:symmetry}
Equation~\eqref{eq:eom} is odd, so $x\mapsto-x$ is a symmetry of the flow and
$\gamma(\theta+\pi)=-\gamma(\theta)$, which holds on the held-out data to
$\DataSymmetryResidual$. Differentiating and propagating, every column of the
frame obeys $B_{F}(\theta+\pi)=-B_{F}(\theta)$, confirmed numerically to
$\FrameAntiperiodicityResidual$. Since the displacement also changes sign,
\eqref{eq:obliquesolve} gives $c(-x)=c(x)$ and the induced action on the
reduced coordinate is
\begin{equation}
S_{z}:(\theta,a,b)\longmapsto(\theta+\pi,\,a,\,b).
\label{eq:symmetry}
\end{equation}
The adjoint-Floquet chart satisfies $u(-x)=S_{z}(u(x))$ exactly; the
symmetrisation in \eqref{eq:learned} extends this to the learned maps. We
derived \eqref{eq:symmetry} from the frame rather than assuming a sign flip of
$(a,b)$, and verified it on training states before it was imposed.

\paragraph{Shared reduced flow.}
\label{sec:reducedflow}
All four maps use the same reduced map, fixed in advance and never refitted:
\begin{equation}
\begin{gathered}
r_{\tau}(\theta,a,b)=\Bigl(\theta+\omega\tau,\;
  e^{\mathrm{Re}\lambda_{c}\tau}R(-\mathrm{Im}\lambda_{c}\tau)
  \begin{bmatrix}a\\ b\end{bmatrix}\Bigr),
\eqbreak \tau=\FlowDt,
\end{gathered}
\label{eq:reducedflow}
\end{equation}
with $R$ a planar rotation and $\lambda_{c}=\log\mu_{c}/T$ the exponent of the
retained pair, $\mu_{c}=\mu_{2}$ being the retained complex multiplier of
\eqref{eq:multipliers}. Because \eqref{eq:reducedflow} is identical for every map and is
tuned to none of them, differences in the invariance residual
\eqref{eq:invariance} are attributable to the reduction map alone. This is a
property of the present protocol, not a general principle for comparing
reduced-order models.

\paragraph{Representation freedom, and the gauge this study fixes.}
\label{sec:gauge}
A reduced description is in general equivalent under invertible changes of
reduced coordinate $\Phi$, accompanied by the corresponding conjugacy of the
reduced dynamics, $r_{\tau}\mapsto \Phi\circ r_{\tau}\circ \Phi^{-1}$. In the
present study the reduced flow is held fixed, so the coordinate freedom that
remains is the set of transformations that leave this frozen flow unchanged,
that is, those that commute with it. Writing the retained pair as a complex
amplitude $\zeta=a+ib$, \eqref{eq:reducedflow} acts on it as multiplication by
the scalar $e^{\lambda_{c}^{\ast}\tau}$, the star denoting complex conjugation.
Constant complex multiplications commute with one
another, so for every $\kappa=\lvert\kappa\rvert e^{i\chi}\in\mathbb{C}\setminus\{0\}$
the relabelling
\begin{equation}
\begin{gathered}
(\theta,\zeta)\longmapsto(\theta,\kappa \zeta),
\eqbreak
r_{\tau}(\theta,\kappa \zeta)=\bigl(\theta+\omega\tau,\;\kappa\,r_{\tau}(\zeta)\bigr),
\end{gathered}
\label{eq:gauge}
\end{equation}
leaves the reduced dynamics unchanged. The phase is not multiplied by $\kappa$:
its unit is fixed by the requirement that it advance at rate $\omega$. It keeps
the trivial freedom of origin, $\theta\mapsto\theta+\theta_{0}$, which the
adopted orbit convention fixes once for all four maps, but it has no analogue
of the complex amplitude scaling. The residual representation freedom examined
in this paper is therefore the constant complex scalar acting on the retained
slow pair.

Three distinctions follow, and the interpretation of the higher-order results
depends on keeping them apart.

\begin{enumerate}
\item \emph{Representation freedom under the frozen flow.} With the reduced
  flow held fixed, the admissible coordinate changes are those commuting with
  it, and \eqref{eq:gauge} is such a transformation. An error functional that measures
  the slow-amplitude discrepancy against a fixed reference scale, as
  \eqref{eq:residual} does, is not invariant under it, so no latent residual is
  by itself a representation-free quantity.
\item \emph{The gauge actually fixed in this experiment.} That general freedom
  is not left open here. The learned maps are pinned by construction,
  $Du_{\mathrm{IF}}(\gamma(\theta))=Du_{\mathrm{aF}}(\gamma(\theta))$,
  so the adjoint-Floquet chart and Models A and B already share one and the
  same on-orbit first-order amplitude normalisation. There is no omitted gauge
  between them to be restored, and multiplying a learned coordinate by a
  scalar $\kappa\neq1$ does not recover a shared convention: it changes the
  pinned first-order normalisation that the three maps had in common.
\item \emph{Nonlinear, amplitude-dependent change.} What $\Delta u$ contributes
  is not of the form \eqref{eq:gauge}. It vanishes with its first derivative on
  the orbit and grows with the excursion, so the ratio it induces between a
  learned amplitude and the chart amplitude tends to one near $\gamma$ and
  departs from one only off it. An amplitude-dependent contraction of that kind
  is not a constant relabelling, and Section~\ref{sec:scale} measures it as
  such.
\end{enumerate}

The first two items pull in opposite directions. Latent residuals are
representation dependent, so no single one of them is decisive; yet the maps
compared here share their first-order gauge by construction, so a fitted
rescaling is not a correction of that comparison.
Section~\ref{sec:metrics} therefore reports the pre-specified endpoint together
with disclosed sensitivity analyses and does not replace one figure of merit by
another.

\begin{widetable}
\centering
\caption{The four coordinate maps and the role each plays. All share the
metric, the phase origin, the amplitude normalisation, the reduced flow
\eqref{eq:reducedflow} and the evaluation protocol; they differ only in the
reduction map. The adjoint-Floquet chart and Models A and B additionally share
one on-orbit first-order normalisation, pinned by construction
(Section~\ref{sec:correction}); the fitted scalar of \eqref{eq:kapparestore}
used in Table~\ref{tab:results} is an external sensitivity transformation
applied afterwards, not a difference between the maps as constructed.}
\label{tab:maps}
\footnotesize
\begin{tabular}{lcll}
\toprule
Map & Param. & Discards & Evidence role \\
\midrule
metric-orthogonal & $0$ & $v_{\perp}$ & instructive contrast \\
adjoint-Floquet   & $0$ & $w_{f}$     & zero-parameter baseline; established theory \\
Model A (shared quadratic prefactor) & $\AsixParameters$  & $w_{f}$ & pre-specified primary higher-order model \\
Model B (quadratic-form correction)  & $\BsixParameters$ & $w_{f}$ & post-hoc corrective model \\
\bottomrule
\end{tabular}

\vspace{4pt}
\begin{minipage}{\linewidth}
\footnotesize\raggedright
\textit{Note.} The adjoint-Floquet chart and Models A and B satisfy the
linearised semiconjugacy relation \eqref{eq:semiconj} and the on-orbit fibre
condition $Du(\gamma(\theta))w_{f}(\theta)=0$, and all four maps are exactly
equivariant under \eqref{eq:symmetry}. The metric-orthogonal map annihilates
$v_{\perp}$ instead and so fails \eqref{eq:semiconj}.
\end{minipage}
\end{widetable}

\subsection{Data and evaluation protocol}
\label{sec:protocol}

\paragraph{Trajectory ensemble.}
Initial conditions are placed on and off the limit cycle as
\begin{equation}
\begin{gathered}
x_{0}=\gamma(\theta_{k})\pm\delta\,v_{\perp}(\theta_{k}),
\eqbreak
\delta\in\{0,\,0.0025,\,0.005,\,0.01,\,0.015,\,0.02\},
\end{gathered}
\label{eq:initial}
\end{equation}
at $\PhaseCount$ equally spaced phases $\theta_{k}=2\pi k/\PhaseCount$,
$k=0,\dots,15$, with both signs of the displacement, along the
metric-normal complement $v_{\perp}$. This was the direction available when the
ensemble was generated. It is normalised to unit metric norm, to within
$\PerturbationCalibration$, so $\delta$ is the metric length of the
displacement; by Section~\ref{sec:theory} it carries both slow and fast Floquet
content, which is what makes it a discriminating test of the reduction map; it
is not the fast Floquet direction and is not used as one. The amplitude ladder,
the phase grid and the split below were fixed before the dataset was
generated.

Phases and amplitudes are divided between the splits by whole trajectory. The
training split uses the eight even phase indices with
$\delta\in\{0,0.0025,0.01,0.02\}$, the validation split the phase indices
$1,5,9,13$ with $\delta\in\{0,0.005,0.015\}$, and the held-out split the
phase indices $3,7,11,15$ with the full ladder. Each phase contributes one
on-orbit trajectory, at $\delta=0$ where the two signs coincide, and two
trajectories per nonzero amplitude, so the splits contain
$8\times7=\TrajTrain$, $4\times5=\TrajValidation$ and $4\times11=\TrajTest$
trajectories. No trajectory, and no symmetry image of one, crosses a split
boundary, and only the held-out split spans the whole ladder.

Trajectories are integrated by RK4 at step $\IntegrationStep$, the step of the
benchmark validation, for $\TrajectoryDuration$ time units and sampled every
$\SampleInterval$, the time step $\tau$ of the reduced map. A separate pilot
run of $\PilotTrajectories$ trajectories, not part of the dataset, over a grid
of twelve phases with the five nonzero amplitudes and both signs, confirmed
that this duration suffices: every pilot trajectory returned to the same limit
cycle, the
worst mean metric distance from the orbit over the last ten time units being
$\PilotWorstTerminalDistance$, and every trajectory of the dataset meets the
same recovery criterion. Re-integrating the training initial conditions at
half the step changes each fitted slope of the two zero-parameter charts, for
both the one-step and the one-period endpoint, by less than $2\times10^{-6}$,
so the first-order scaling is not an artefact of the integration step.

Because the ensemble probes a single direction field, chosen as exactly the
complement whose use the paper examines, it is informative about that failure
mode by construction and is not a sample of general off-manifold geometry. The
first-order comparison is accordingly a targeted verification of the mechanism
in \eqref{eq:orthcoeff}, not an average-case comparison between projection
schemes. The metric-orthogonal map enters as an instructive contrast; the
baseline against which the learned maps are judged is the adjoint-Floquet
chart. The effect sizes reported below, including the relative weight of
retained phase and retained amplitude, are likewise properties of this
perturbation family, and the higher-order comparison is restricted to it: we
extrapolate neither to displacements along $w_{f}$ nor to displacements inside
the retained bundle.

\paragraph{Symmetry and the effective sample size.}
Since $\gamma(\theta+\pi)=-\gamma(\theta)$ and
$v_{\perp}(\theta+\pi)=-v_{\perp}(\theta)$, the initial conditions at phase
index $k$ and at $k+8$ with the same sign are images of each other under
$x\mapsto-x$, and so are the trajectories they generate; they are not
independent experiments. The held-out phases therefore form
$\TestDistinctPhases$ symmetry-distinct phase pairs, $(\theta_{3},\theta_{11})$
and $(\theta_{7},\theta_{15})$, and the $\TestTrajectories$ held-out
trajectories reduce to $\TestSymmetryUnits$ symmetry-distinct units: for each
pair, one on-orbit unit and one unit per sign at each of the five nonzero
amplitudes. The $\TestOffManifoldUnits$ units with $\delta>0$ are the
off-manifold units, four per amplitude. All interval estimates resample those
units rather than trajectories or time samples, by a cluster bootstrap with
$\BootstrapResamples$ resamples and a recorded seed; we report no $p$-values,
and every principal comparison below is also reported separately for each
phase pair. Because only $\TestDistinctPhases$ symmetry-distinct phases are
represented, these intervals describe variability across the tested
symmetry-distinct conditions and are not interpreted as population-level
confidence intervals for generalisation over phase.

\paragraph{One pre-specified endpoint and three disclosed sensitivities.}
\label{sec:metrics}
The pre-specified endpoint is the latent residual defined next, that is, the
semiconjugacy residual measured in the reduced coordinates, under the
normalisation fixed before training and recorded before held-out evaluation.
Everything after it is a post-hoc sensitivity analysis of that endpoint,
introduced because a latent residual is representation dependent
(Section~\ref{sec:gauge}). We report all
four and do not promote any sensitivity to the status of a more canonical
measurement: each fixes a different convention, and none supplies an
independent nonlinear reference coordinate.

\paragraph{The pre-specified scale-sensitive latent metric.}
The quantity minimised in training is the normalised residual of
\eqref{eq:invariance} at the sampling interval $\tau=\FlowDt$. For a pair of
consecutive samples,
\begin{equation}
\begin{gathered}
e^{2}(x)=\tfrac13\Bigl[\,2-2\cos\bigl(\theta^{+}-\hat{\theta}\bigr)
\eqsplitdelim
  +\bigl((a^{+}-\hat{a})^{2}+(b^{+}-\hat{b})^{2}\bigr)/s^{2}\Bigr],
\eqbreak
(\hat{\theta},\hat{a},\hat{b})=r_{\tau}\bigl(u(x)\bigr),
\end{gathered}
\label{eq:residual}
\end{equation}
with $(\theta^{+},a^{+},b^{+})=u(\varphi^{\tau}(x))$ and $s=\AmplitudeScale$ the
training-set amplitude scale. Finite-horizon accuracy uses the same
normalisation: for a horizon of $m$ periods we encode a state, iterate
\eqref{eq:reducedflow} for $\lceil mT/\tau\rfloor$ steps and compare with the
encoding of the true future state, reporting the latent distance divided by
$\sqrt{3}$; $m=1$ and $m=5$ were fixed before any comparison was run. The
amplitude term of \eqref{eq:residual} is not invariant under \eqref{eq:gauge}.
We retain the metric because it is the one under which the models were
trained, selected and first evaluated, label it the \emph{scale-sensitive}
metric, and rest no claim about a learned map on it alone.

One further point bears on the one-step endpoint: it is the training objective,
so the learned maps are optimised against the same functional that scores them,
and a held-out split controls for overfitting to particular trajectories but
not for that alignment. We treat one-step invariance as a training-consistent
diagnostic and take one-period and multi-period prediction as the endpoints
that carry evidence.

\paragraph{Validation-fitted global amplitude recalibration.}
For each learned map we fit, on validation data alone, the constant complex
scalar that best expresses its slow coordinate in
the amplitude units of the adjoint-Floquet chart,
\begin{equation}
\begin{gathered}
\kappa=\arg\min_{\kappa\in\mathbb{C}}
  \bigl\lVert \zeta_{\mathrm{aF}}-\kappa\,\zeta_{\mathrm{learned}}\bigr\rVert^{2},
\eqbreak
\zeta^{\kappa}_{\mathrm{learned}}=\kappa\,\zeta_{\mathrm{learned}},
\end{gathered}
\label{eq:kapparestore}
\end{equation}
and recompute every endpoint on $\zeta^{\kappa}$ with the phase untouched. By
\eqref{eq:gauge} the recalibration changes no reduced dynamics. It is
\emph{not} a restoration of a gauge the maps had failed to share: they already
share their pinned on-orbit first-order normalisation, and for $\kappa\neq1$
this transformation alters that normalisation instead of recovering it. It is a
sensitivity analysis, showing how far the numerical ranking under
\eqref{eq:residual} moves when the amplitude unit of a learned map is rescaled
globally by a single fitted constant. The transformation, the whitening
covariances of the secondary check below, the metric definitions and the
evaluator were hashed and frozen before the held-out trajectories were
reopened. The analysis is nonetheless \emph{post hoc}, motivated by review of
the original result, and it is reported at that evidential weight throughout.

\paragraph{Adjoint-Floquet-targeted future consistency.}
\label{sec:canonicalmetric}
A learned map still appears on both sides of a rollout error, encoding the
initial state and also the future state used as the target, so a map that
contracts its coordinate shrinks both. The diagnostic that avoids this encodes
once and never again. For an initial held-out state $x_{0}$ we form
$z_{0}=u(x_{0})$ from the map under test, optionally apply the disclosed
recalibration \eqref{eq:kapparestore}, propagate the frozen reduced flow to
$z_{\rm pred}(t)=r_{t}(z_{0})$, and compare against
\begin{equation}
z_{\rm target}(t)=u_{\mathrm{aF}}\bigl(\varphi^{t}(x_{0})\bigr),
\label{eq:canonical}
\end{equation}
the adjoint-Floquet coordinates of the true future state; the learned encoder
is never applied at the endpoint. We evaluate at one, two and five periods, the
first two being where the question has content by the fast decay rate of
Section~\ref{sec:benchmark}, and report the phase and slow-amplitude components
separately, the phase coordinate having no analogue of the scalar freedom
\eqref{eq:gauge} once its rate and common origin are fixed.

The diagnostic therefore tests whether a map's initial coordinate improves the
prediction of the future full-order state when that state is expressed in the
common adjoint-Floquet chart. Its advantages are specific: the learned encoder
appears only at the initial state, a contraction of the coordinate cannot
shrink prediction and target together, and all maps are compared in the same
frozen chart units. It is also reference asymmetric, by construction. The
target is supplied by the adjoint-Floquet chart itself, not by an independently
computed exact nonlinear phase--isostable coordinate, so the quantity measured
is whether a map's reduced prediction stays consistent with future full states
\emph{as read by the linear chart}. A map that assigns coordinates differently
from that chart off the orbit is penalised whether or not its assignment is
closer to the exact nonlinear quotient. The diagnostic is therefore legitimate
as a chart-consistency test and as a guard against endpoint re-encoding, but it
is not independent ground truth and does not establish which map is closest to
the exact nonlinear phase--isostable coordinate. It is likewise post hoc.

\paragraph{A secondary check.}
As an independent control we re-express the residual of \eqref{eq:residual} in
each map's own validation-frozen whitened amplitude units, keeping the circular
phase term separate. Whitening by a general symmetric matrix does not commute
with \eqref{eq:reducedflow}, so it is applied to the residual and never to the
state; this is a consistency check on the recalibration of
\eqref{eq:kapparestore}, not a third primary endpoint.

\paragraph{Two quantities deliberately absent.}
Reconstruction error is undefined here, since none of the maps has a decoder. A
transverse-collapse metric, used in earlier work on this benchmark, is excluded
as evidence: because $v_{\perp}$ is not the fast Floquet direction,
Section~\ref{sec:theory} implies that states at $\pm\delta v_{\perp}$ should be
separated in the reduced coordinate, not identified, so the metric rewards the
wrong behaviour.

\subsection{Training and model selection}
\label{sec:selection}

The learned maps were trained on the residual \eqref{eq:residual} over training
trajectories only, with the Adam optimiser \cite{kingma2015adam} at learning
rate $\TrainLearningRate$, batch size $\TrainBatchSize$, $\TrainIterations$
iterations, a fixed random seed and a ridge penalty of weight $\RidgeWeight$ on
the network parameters, retaining the checkpoint with the lowest validation
residual. These settings were fixed in the training configuration before
training, and no hyperparameter search was run for the two maps reported here.
A pre-specified heuristic selection rule, fixed in advance, governed selection
between Models A and B: prefer the
larger model only if it improves the validation residual by at least five per
cent without degrading any structural diagnostic by more than ten per cent.

That rule selected Model~A, for a reason that turned out to be spurious.
Model~B improved the validation residual by $\BsixValidationImprovement$ per
cent, but the relative ten-per-cent clause was being applied to diagnostics
whose exact value is zero by construction. Their measured values, of order
$\AsixFirstOrderResidual$ for the first-derivative diagnostics, are
finite-difference truncation errors: for both models they decrease at second
order as the finite-difference step is reduced, and a relative comparison
between two such values is meaningless. We identified this only after Model~A
had been evaluated on the held-out set, so the amendment is post hoc. The
revised rule treats zero-target diagnostics below a numerical floor as
equivalent. The floors were set from the reference geometry and validation data
alone, never from held-out performance: $\FloorFirstOrder$ for the two
derivative diagnostics, one per cent of their acceptance threshold fixed before
training; $\FloorSymmetry$ for the symmetry diagnostic, ten times the
adjoint-Floquet chart's own asymmetry on validation data; and double-precision
round-off for the on-orbit correction. Under the revised rule Model~B is
selected, from an unmodified checkpoint never evaluated on held-out data.

Model~A therefore remains the pre-specified primary higher-order model, while
Model~B is retained as a disclosed post-hoc corrective sensitivity whose
held-out numbers carry the weight appropriate to a post-hoc selection. That
provenance ordering is fixed and is
used throughout. It records which result was specified in advance, not which
interpretation is correct: as Section~\ref{sec:discussion} sets out, the
pre-specified Model~A endpoint is measured in units that partly reward a
smaller coordinate, so a favourable value there is not by itself a confirmed
benefit, and the post-hoc diagnostics that qualify it are not a confirmed null.

%% file: sections/04_results.tex
\section{Results}
\label{sec:results}

\subsection{Verification and first-order geometry}
\label{sec:verification}

\paragraph{Verification of the constructions.}
The numerical checks confirm that every coordinate map has, to numerical
accuracy, the structural properties on which the comparison relies. Each map
reproduces the limit cycle: the maximum reference phase error is
$\ObliqueLcoPhaseError$ for the adjoint-Floquet chart and
$\OrthogonalLcoPhaseError$ for the orthogonal one, the standardised amplitudes
vanish to $\ObliqueLcoAmplitude$, and coarsening the orbit interpolant from
$\GridNodesPrimary$ to $\GridNodesConvergence$ nodes changes the chart
coordinates by at most $\ObliqueGridConvergenceRms$ in RMS. For both learned
models the correction vanishes on the orbit, to at most $\AsixLcoCorrection$ in
standardised units, and so does its first derivative: along all four frame
directions the finite-difference derivative at step $10^{-5}$ is
$\AsixFirstOrderResidual$ for Model~A and about twice that for Model~B, below
the floor $\FloorFirstOrder$ in both cases, and for both it falls at second
order as the step is reduced, by two orders of magnitude for a tenfold
reduction, as the truncation error of the difference quotient does. The
fast-direction derivative $Du\,w_{f}$ behaves identically, at
$\AsixFastFiberResidual$ for Model~A. These values confirm numerically what the
quadratic prefactor of \eqref{eq:forms} guarantees structurally. For Model~A a
power law fitted to the magnitude of the correction gives exponents
$p=\AsixCorrectionScalingSynthetic$ for synthetic displacements and
$p=\AsixCorrectionScalingData$ across the perturbation ladder, consistent with a
leading term at least quadratic in the excursion, though a fitted exponent of
this kind does not by itself establish exact quadratic homogeneity. The
equivariance \eqref{eq:symmetry} holds for both learned models with maximum
defects below the chart's own $\ObliqueSymmetryAmplitudeMax$ in standardised
amplitude, which sets the floor; for Model~B the maximum is
$\BsixSymmetryAmplitudeMax$.
These checks rule out violations of the imposed on-orbit, first-order,
fast-direction and symmetry constraints as explanations of the differences
analysed below. They verify the constructions as specified; they say nothing
about how closely the learned correction approximates the exact nonlinear
phase--isostable coordinate.

\paragraph{The discarded direction: metric normal against fast Floquet direction.}
\label{sec:geometry}
Figure~\ref{fig:geometry} shows that the two candidate complements are far
apart everywhere on the orbit. Measured in the metric \eqref{eq:metric} over
$\DecompositionPhases$ phases, the angle between $v_{\perp}$ and $w_{f}$ ranges
from $\AngleVperpWfMin^{\circ}$ to $\AngleVperpWfMax^{\circ}$ with median
$\AngleVperpWfMedian^{\circ}$ (Figure~\ref{fig:geometry}(b)), never approaching
zero, so the metric-normal complement is nowhere a good approximation to the
fast Floquet direction on this orbit.

The consequence is visible in the decomposition \eqref{eq:vperpdecomp}
(Figure~\ref{fig:geometry}(c)). A unit $v_{\perp}$ has a retained component of
metric norm $\VperpSlowNormMedian$ (median), comparable to the discarded fast
component $\VperpFastCoefficientMedian$, so a displacement along $v_{\perp}$
carries first-order content that the adjoint-Floquet chart keeps and the
orthogonal map annihilates. That retained content has both a phase and a
slow-amplitude part. The phase coefficient $|c_{\theta}|$ ranges from
$\VperpPhaseCoefficientMin$ to $\VperpPhaseCoefficientMax$ radians per unit
$\delta$, with median $\VperpPhaseCoefficientMedian$; it is the infinitesimal
phase response curve evaluated along $v_{\perp}$ \cite{brown2004phase}, a
standard quantity whose size for this system is what we report. At
$\delta=0.02$ it shifts the asymptotic phase of each of the states at
$\pm\delta v_{\perp}$, to first order and in opposite directions, by about
$\VperpPhaseShiftLargestDelta$ radians at the median, placing them on
different isochrons. The retained-amplitude part is not negligible either. We
do not rank the two. As Section~\ref{sec:firstorder} notes, $c_{\theta}$ and
$(c_{a},c_{b})$ multiply basis vectors with different physical normalisations,
and the residual \eqref{eq:residual} weights the amplitude difference by $1/s$;
a claim about which retained component dominates would therefore be a claim
about the units adopted rather than about the geometry.

Figure~\ref{fig:scaling} shows what this does to the reduced coordinate: the
metric-orthogonal projection leaves a first-order error and the adjoint-Floquet
projection an approximately second-order one. The orthogonal projection incurs
an error growing as $\delta^{\SlopeOrthogonalInvariance}$ in the invariance
residual and $\delta^{\SlopeOrthogonalOnePeriod}$ in one-period rollout;
replacing the discarded direction by $w_{f}$ changes the exponents to
$\SlopeObliqueInvariance$ and $\SlopeObliqueOnePeriod$, as the first-order
coefficients \eqref{eq:orthcoeff} and \eqref{eq:semiconj} predict. The
prediction that the adjoint-Floquet chart would remove the linear term and
leave $O(\delta^{2})$ was specified, with an expected slope near two, and
recorded before the held-out set was opened; slopes of
$\ObliqueTrainSlopeInvariance$ and $\ObliqueValidationSlopeInvariance$ had been
obtained on training and validation data at that time. In pooled terms the
invariance residual falls from $\TestOrthogonalInvariance$ to
$\TestObliqueInvariance$ and the one-period error from
$\TestOrthogonalOnePeriod$ to $\TestObliqueOnePeriod$, with no fitted
parameters in either map. Neither this result nor the geometry behind it
involves a learned quantity, and neither is affected by anything in the
remainder of this section. We therefore treat the adjoint-Floquet chart, not
the orthogonal one, as the baseline against which learning must be judged.

\begin{widefigure}
\centering
\includegraphics[width=\textwidth]{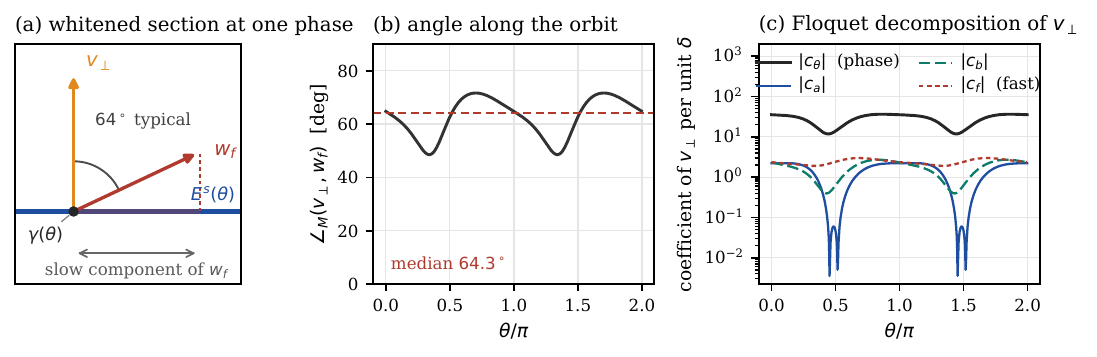}
\caption{The metric normal is not the fast Floquet direction. (a) The
two-dimensional section of the whitened state space containing $v_{\perp}$ and
$w_{f}$ at one phase, drawn to scale; the slow bundle meets this section in the
horizontal line. (b) The metric angle between the two directions around the
orbit. (c) Coefficients of a unit $v_{\perp}$ in the Floquet frame
\eqref{eq:frame}: retained content is present in both the phase and the
slow-amplitude directions. Those coefficients multiply differently normalised
basis vectors, so their raw magnitudes are not directly comparable and the
panel is not evidence that either retained component dominates.}
\label{fig:geometry}
\end{widefigure}

\begin{widefigure}
\centering
\includegraphics[width=0.94\textwidth]{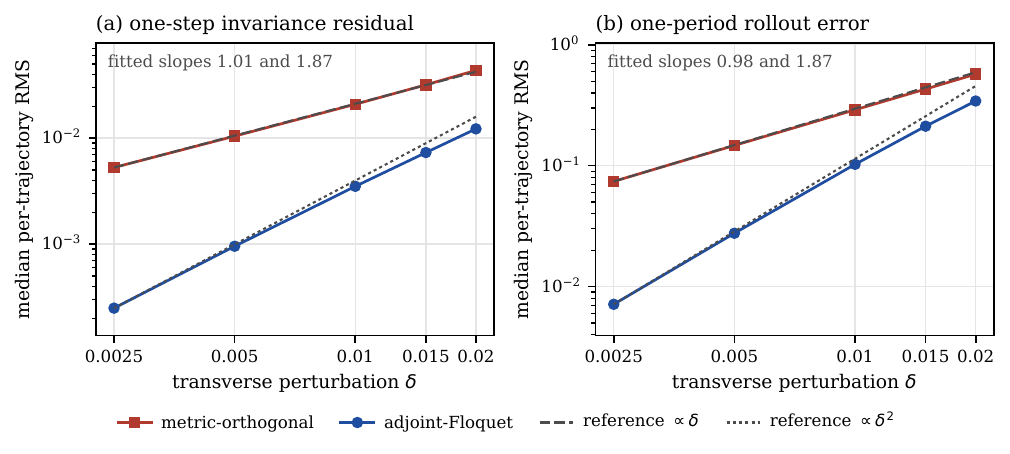}
\caption{Error scaling of the two zero-parameter charts against the transverse
amplitude $\delta$ on held-out data. Markers joined by solid lines are the
measured medians; the dashed and dotted lines are reference slopes proportional
to $\delta$ and $\delta^{2}$, and each panel states the fitted slopes.
Replacing the metric-normal direction by the fast Floquet direction as the
discarded direction changes the exponent from about one to about two for both
the one-step invariance residual (a) and the one-period rollout error (b). The
$O(\delta^{2})$ expectation was specified before the held-out set was opened.}
\label{fig:scaling}
\end{widefigure}

\subsection{Higher-order corrections and metric sensitivity}
\label{sec:scale}

\paragraph{The pre-specified endpoint.}
Under the pre-specified scale-sensitive metric both learned maps improve on the
adjoint-Floquet baseline at the short horizons (Table~\ref{tab:results}, first
column). Over the $\TestOffManifoldUnits$ off-manifold units the paired median
error ratio to the chart is $\PairedAsixObliqueInvarianceInverse$ for Model~A
and $\PairedBsixObliqueInvarianceInverse$ for Model~B on the one-step residual,
and $\PairedAsixObliqueOnePeriodInverse$ and
$\PairedBsixObliqueOnePeriodInverse$ at one period. At five periods the ratios
are $\PairedAsixObliqueFivePeriodInverse$ and
$\PairedBsixObliqueFivePeriodInverse$, so neither map improves on the chart
there. Fitted against $\delta$ in the same way as in Figure~\ref{fig:scaling},
the held-out invariance residual scales as $\delta^{\SlopeAsixInvariance}$ for
Model~A and $\delta^{\SlopeBsixInvariance}$ for Model~B.

\begin{widetable}
\centering
\caption{Held-out comparison of the learned maps against the zero-parameter
adjoint-Floquet chart on $\TestOffManifoldUnits$ symmetry-distinct
off-manifold units. Entries are the paired median error ratio of the learned
map to the chart, so values below one favour the learned map, with the
cluster-bootstrap $95\%$ confidence interval (CI) on the median $\log_{10}$
gain and the number of units favouring the learned map. The upper block reports
the pre-specified endpoint alongside two post-hoc sensitivities of it: a
validation-fitted global amplitude recalibration \eqref{eq:kapparestore} and
the self-whitened check, both disclosed post hoc. The lower block reports the
adjoint-Floquet-targeted future-consistency diagnostic
\eqref{eq:canonical}, which is reference asymmetric because the baseline chart
supplies the target coordinates. None of the three post-hoc columns is a more
canonical measurement of the pre-specified quantity, and none is designated the
primary control; Section~\ref{sec:discussion} states what they jointly do and
do not identify.}
\label{tab:results}
\footnotesize
\begin{tabular}{llccc}
\toprule
 & & \textbf{Pre-specified} & Recalibrated & Self-whitened \\
Endpoint & Map & (prospective) & (post hoc) & (post hoc) \\
\midrule
\multirow{2}{*}{one-step invariance}
 & Model A (primary)    & $\PairedAsixObliqueInvarianceInverse$ & $\RestoredAsixInvarianceRatio$ $[\RestoredAsixInvarianceCiLow,\,\RestoredAsixInvarianceCiHigh]$, $\RestoredAsixInvarianceUnits/\TestOffManifoldUnits$ & $\WhitenedAsixInvarianceRatio$ \\
 & Model B (corrective) & $\PairedBsixObliqueInvarianceInverse$ & $\RestoredBsixInvarianceRatio$ $[\RestoredBsixInvarianceCiLow,\,\RestoredBsixInvarianceCiHigh]$, $\RestoredBsixInvarianceUnits/\TestOffManifoldUnits$ & $\WhitenedBsixInvarianceRatio$ \\
\midrule
\multirow{2}{*}{one-period rollout}
 & Model A (primary)    & $\PairedAsixObliqueOnePeriodInverse$ & $\RestoredAsixOnePeriodRatio$ $[\RestoredAsixOnePeriodCiLow,\,\RestoredAsixOnePeriodCiHigh]$, $\RestoredAsixOnePeriodUnits/\TestOffManifoldUnits$ & $\WhitenedAsixOnePeriodRatio$ \\
 & Model B (corrective) & $\PairedBsixObliqueOnePeriodInverse$ & $\RestoredBsixOnePeriodRatio$ $[\RestoredBsixOnePeriodCiLow,\,\RestoredBsixOnePeriodCiHigh]$, $\RestoredBsixOnePeriodUnits/\TestOffManifoldUnits$ & $\WhitenedBsixOnePeriodRatio$ \\
\midrule
\multirow{2}{*}{five-period rollout}
 & Model A (primary)    & $\PairedAsixObliqueFivePeriodInverse$ & $\RestoredAsixFivePeriodRatio$ $[\RestoredAsixFivePeriodCiLow,\,\RestoredAsixFivePeriodCiHigh]$, $\RestoredAsixFivePeriodUnits/\TestOffManifoldUnits$ & $\WhitenedAsixFivePeriodRatio$ \\
 & Model B (corrective) & $\PairedBsixObliqueFivePeriodInverse$ & $\RestoredBsixFivePeriodRatio$ $[\RestoredBsixFivePeriodCiLow,\,\RestoredBsixFivePeriodCiHigh]$, $\RestoredBsixFivePeriodUnits/\TestOffManifoldUnits$ & $\WhitenedBsixFivePeriodRatio$ \\
\bottomrule
\end{tabular}

\vspace{4pt}
\begin{tabular}{llcc}
\toprule
\multicolumn{4}{l}{\emph{Adjoint-Floquet-targeted future consistency} (post hoc; reference asymmetric:} \\
\multicolumn{4}{l}{encode once, propagate, compare against future states encoded by the baseline chart)} \\
Horizon & Map & ratio $[95\%$ CI$]$, units & phase component only \\
\midrule
\multirow{2}{*}{one period}
 & Model A (primary)    & $\CanonicalAsixOnePeriodRatio$ $[\CanonicalAsixOnePeriodCiLow,\,\CanonicalAsixOnePeriodCiHigh]$, $\CanonicalAsixOnePeriodUnits/\TestOffManifoldUnits$ & $\CanonicalPhaseAsixOnePeriodRatio$ \\
 & Model B (corrective) & $\CanonicalBsixOnePeriodRatio$ $[\CanonicalBsixOnePeriodCiLow,\,\CanonicalBsixOnePeriodCiHigh]$, $\CanonicalBsixOnePeriodUnits/\TestOffManifoldUnits$ & $\CanonicalPhaseBsixOnePeriodRatio$ \\
\midrule
\multirow{2}{*}{two periods}
 & Model A (primary)    & $\CanonicalAsixTwoPeriodRatio$ $[\CanonicalAsixTwoPeriodCiLow,\,\CanonicalAsixTwoPeriodCiHigh]$, $\CanonicalAsixTwoPeriodUnits/\TestOffManifoldUnits$ & $\CanonicalPhaseAsixTwoPeriodRatio$ \\
 & Model B (corrective) & $\CanonicalBsixTwoPeriodRatio$ $[\CanonicalBsixTwoPeriodCiLow,\,\CanonicalBsixTwoPeriodCiHigh]$, $\CanonicalBsixTwoPeriodUnits/\TestOffManifoldUnits$ & $\CanonicalPhaseBsixTwoPeriodRatio$ \\
\midrule
\multirow{2}{*}{five periods}
 & Model A (primary)    & $\CanonicalAsixFivePeriodRatio$ $[\CanonicalAsixFivePeriodCiLow,\,\CanonicalAsixFivePeriodCiHigh]$, $\CanonicalAsixFivePeriodUnits/\TestOffManifoldUnits$ & $\CanonicalPhaseAsixFivePeriodRatio$ \\
 & Model B (corrective) & $\CanonicalBsixFivePeriodRatio$ $[\CanonicalBsixFivePeriodCiLow,\,\CanonicalBsixFivePeriodCiHigh]$, $\CanonicalBsixFivePeriodUnits/\TestOffManifoldUnits$ & $\CanonicalPhaseBsixFivePeriodRatio$ \\
\bottomrule
\end{tabular}

\vspace{4pt}
\begin{minipage}{\linewidth}
\footnotesize\raggedright
\textit{Note.} Wherever an interval is shown, an entry reads
ratio $[95\%$ CI$]$, units favouring the learned map out of
$\TestOffManifoldUnits$; the remaining columns give the ratio alone. Ratios
below one favour the learned map.
\end{minipage}
\end{widetable}

\paragraph{The scale of the learned coordinates.}
Figure~\ref{fig:scalegauge}(a) shows that the learned slow coordinates occupy a
smaller amplitude range than the chart's over the validation ensemble while
keeping the shape of its covariance ellipse. The RMS standardised
slow-coordinate norm is $\ScaleChartRms$ for the adjoint-Floquet chart,
$\ScaleAsixRms$ for Model~A and $\ScaleBsixRms$ for Model~B, with anisotropy
ratios $\ScaleChartAnisotropy$, $\ScaleAsixAnisotropy$ and
$\ScaleBsixAnisotropy$. Fitting \eqref{eq:kapparestore} on $\GaugeFitSamples$
validation samples gives $|\kappa|=\GaugeAsixAlpha$ with rotation
$\chi=\GaugeAsixRotation^{\circ}$ for Model~A, accounting for a fraction
$\GaugeAsixExplained$ of the variance of the chart coordinate, and
$|\kappa|=\GaugeBsixAlpha$ at $\chi=\GaugeBsixRotation^{\circ}$ for Model~B,
accounting for $\GaugeBsixExplained$. The corresponding amplitude ratios
$1/|\kappa|$ are $\GaugeAsixGain$ and $\GaugeBsixGain$.

Figure~\ref{fig:scalegauge}(b) shows that this contraction is amplitude
dependent and transient, and so not a transformation of the form
\eqref{eq:gauge}. The median local ratio
$|\zeta_{\rm learned}|/|\zeta_{\mathrm{aF}}|$ over all validation samples is
$\LocalAsixMedian$ for Model~A and $\LocalBsixMedian$ for Model~B, almost
exactly one; resolved by time since release it is $\LocalAsixFirstPeriod$ and
$\LocalBsixFirstPeriod$ during the first period and returns to
$\LocalAsixSettled$ and $\LocalBsixSettled$ once trajectories settle. The maps
contract where the excursion is large and approach the chart near the orbit,
which is what a correction pinned to $Du_{\mathrm{aF}}$ on $\gamma$ and growing
at least quadratically away from it must do. Describing either map as applying
a constant rescaling would therefore misstate its own measured behaviour.

Figure~\ref{fig:scalegauge}(c) shows that the single fitted scalars
nevertheless bound how much of the measured gain the residual can attribute to
geometry unaided. Scaling the chart's slow coordinate by the reciprocal
fitted amplitude factor $1/|\kappa|$ of each model, which contracts it towards
that model's amplitude range and changes no dynamics whatever, already
reproduces
$\ScaleFractionAsixInvariance$ of Model~A's measured one-step validation gain
and $\ScaleFractionAsixOnePeriod$ of its one-period gain, with
$\ScaleFractionBsixInvariance$ and $\ScaleFractionBsixOnePeriod$ for Model~B.
Read correctly this is a statement about the metric, not about the maps: a
comparable share of the gain is available from a pure change of amplitude unit,
so the residual on its own cannot separate a better assignment from a smaller
coordinate. It does not follow, and the preceding paragraph shows it is not
true, that the learned maps act as such a change of unit.

\paragraph{Sensitivity to validation-fitted amplitude recalibration.}
\label{sec:controlled}
Figure~\ref{fig:controlled}(a) shows that the recalibration
\eqref{eq:kapparestore} moves the held-out ranking substantially; open markers
give the pre-specified endpoint and filled markers the recalibrated one.
Against the adjoint-Floquet chart, the paired median error ratio of Model~A
over the $\TestOffManifoldUnits$ off-manifold units becomes
$\RestoredAsixInvarianceRatio$ for the one-step residual, interval
$[\RestoredAsixInvarianceCiLow,\RestoredAsixInvarianceCiHigh]$ on the
$\log_{10}$ gain with Model~A lower on $\RestoredAsixInvarianceUnits$ of
$\TestOffManifoldUnits$ units; $\RestoredAsixOnePeriodRatio$ at one period,
$[\RestoredAsixOnePeriodCiLow,\RestoredAsixOnePeriodCiHigh]$,
$\RestoredAsixOnePeriodUnits$ units; and $\RestoredAsixFivePeriodRatio$ at five
periods, $[\RestoredAsixFivePeriodCiLow,\RestoredAsixFivePeriodCiHigh]$,
$\RestoredAsixFivePeriodUnits$ units. The first two intervals contain zero and
the third excludes it against Model~A. The self-whitened check moves the same
way, at $\WhitenedAsixInvarianceRatio$, $\WhitenedAsixOnePeriodRatio$ and
$\WhitenedAsixFivePeriodRatio$, so the shift is not peculiar to one choice of
rescaling.

For Model~B the recalibrated ratios are $\RestoredBsixInvarianceRatio$ at one
step ($[\RestoredBsixInvarianceCiLow,\RestoredBsixInvarianceCiHigh]$,
$\RestoredBsixInvarianceUnits$ units), $\RestoredBsixOnePeriodRatio$ at one
period ($[\RestoredBsixOnePeriodCiLow,\RestoredBsixOnePeriodCiHigh]$,
$\RestoredBsixOnePeriodUnits$ units) and $\RestoredBsixFivePeriodRatio$ at
five, with the self-whitened check giving $\WhitenedBsixInvarianceRatio$ and
$\WhitenedBsixOnePeriodRatio$. The point estimates still favour Model~B at the
short endpoints, by rather less than the pre-specified metric reported, and
both intervals now contain zero.

The ranking under \eqref{eq:residual} is therefore not robust to a global
change of amplitude unit. The recalibrated ranking is not thereby the correct
one, because the recalibration alters the first-order normalisation that the
maps share (Section~\ref{sec:metrics}).

\begin{widefigure}
\centering
\includegraphics[width=\textwidth]{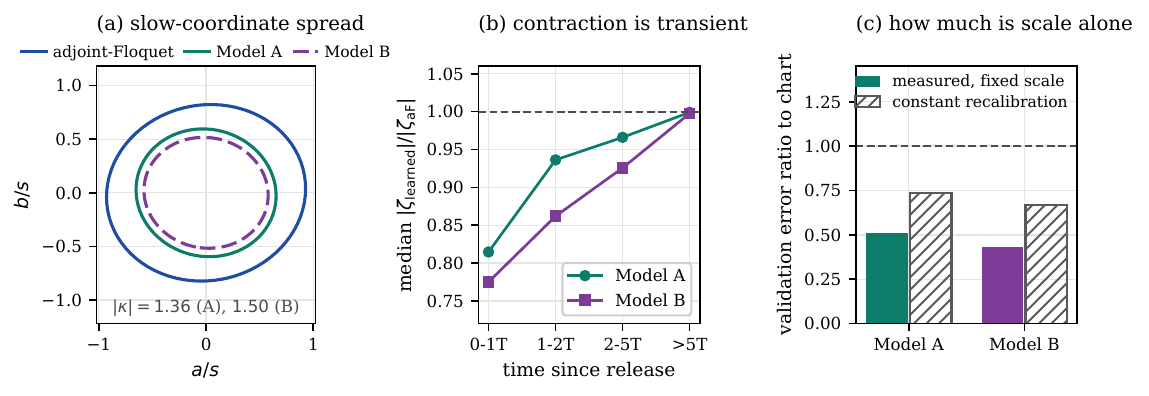}
\caption{Sensitivity of latent-error rankings to amplitude scaling, on
validation data. (a) One-standard-deviation ellipses of the standardised slow
coordinate. (b) Median local ratio $|\zeta_{\rm learned}|/|\zeta_{\rm aF}|$
against time since release. (c) Measured error ratio to the adjoint-Floquet
chart under the fixed normalisation (solid bars), against the ratio produced
by scaling the chart's slow coordinate by the reciprocal validation-fitted
factor $1/|\kappa|$ alone (hatched bars).
Panel (c) bounds how much of the measured gain a pure change of amplitude unit
can reproduce; it does not identify the learned maps with such a change, which
panel (b) rules out. Values are given in Section~\ref{sec:scale}.}
\label{fig:scalegauge}
\end{widefigure}

\begin{widefigure}
\centering
\includegraphics[width=\textwidth]{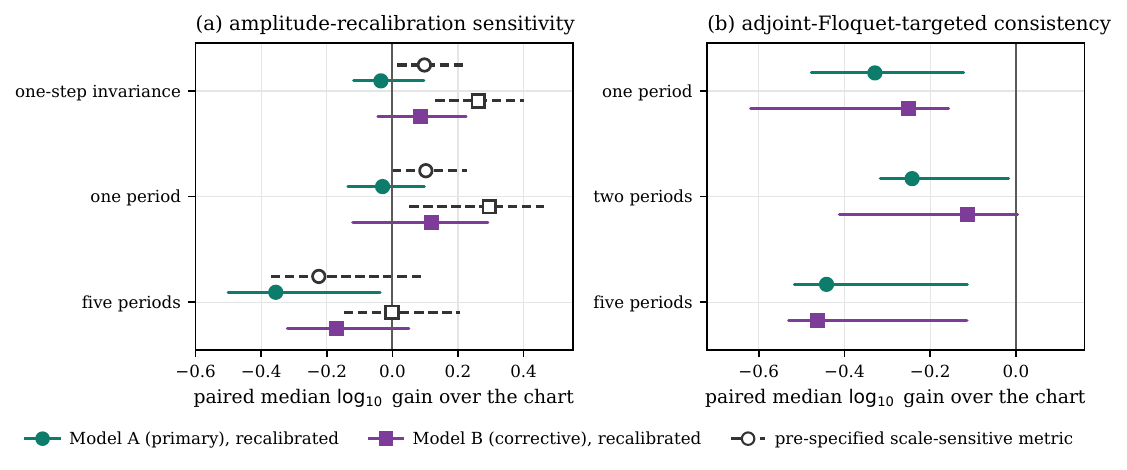}
\caption{Held-out comparison and its post-hoc sensitivities. Markers are paired
median $\log_{10}$ gains over the adjoint-Floquet chart across the
$\TestOffManifoldUnits$ symmetry-distinct off-manifold units, with
cluster-bootstrap $95\%$ intervals; positive favours the learned map. (a) The
pre-specified scale-sensitive endpoint (open markers, dashed intervals) against
the same endpoints after the global recalibration \eqref{eq:kapparestore}
(filled markers). (b) Future consistency against targets supplied by the
adjoint-Floquet chart, which makes the panel reference asymmetric
(Section~\ref{sec:canonicalmetric}); neither panel is an independent ranking of
predictive accuracy. Model~A is the prospectively specified primary model and
Model~B the disclosed corrective one; both panels are post-hoc sensitivity
analyses.}
\label{fig:controlled}
\end{widefigure}

\subsection{Future consistency and robustness}
\label{sec:canonical}

\paragraph{Future consistency relative to the linear adjoint-Floquet chart.}
Figure~\ref{fig:controlled}(b) shows that, when the future target coordinates
are supplied by the adjoint-Floquet chart itself
(Section~\ref{sec:canonicalmetric}), both learned maps are less consistent with
those targets than the chart is. For Model~A the paired median ratio is
$\CanonicalAsixOnePeriodRatio$ at one period
($[\CanonicalAsixOnePeriodCiLow,\CanonicalAsixOnePeriodCiHigh]$, with
$\CanonicalAsixOnePeriodUnits$ of $\TestOffManifoldUnits$ units favouring it),
$\CanonicalAsixTwoPeriodRatio$ at two
($[\CanonicalAsixTwoPeriodCiLow,\CanonicalAsixTwoPeriodCiHigh]$) and
$\CanonicalAsixFivePeriodRatio$ at five
($[\CanonicalAsixFivePeriodCiLow,\CanonicalAsixFivePeriodCiHigh]$). For
Model~B they are $\CanonicalBsixOnePeriodRatio$
($[\CanonicalBsixOnePeriodCiLow,\CanonicalBsixOnePeriodCiHigh]$, with
$\CanonicalBsixOnePeriodUnits$ of $\TestOffManifoldUnits$ units favouring it),
$\CanonicalBsixTwoPeriodRatio$ and $\CanonicalBsixFivePeriodRatio$. Every
interval except Model~B's at two periods excludes zero, in each case against
the learned map.

Separating the two components of \eqref{eq:canonical} locates the
disagreement mainly in the slow amplitude: at one and two periods the amplitude
ratios differ from the combined ones by less than $0.01$, and at five periods
they are larger. The phase component, which has no analogue of the scalar
freedom \eqref{eq:gauge}, is the one place where a learned map agrees with the
target better: Model~B reaches ratio $\CanonicalPhaseBsixOnePeriodRatio$ at one
period ($[\CanonicalPhaseBsixOnePeriodCiLow,\CanonicalPhaseBsixOnePeriodCiHigh]$)
and $\CanonicalPhaseBsixTwoPeriodRatio$ at two
($[\CanonicalPhaseBsixTwoPeriodCiLow,\CanonicalPhaseBsixTwoPeriodCiHigh]$), the
interval excluding zero only at two periods, against
$\CanonicalPhaseAsixOnePeriodRatio$ and $\CanonicalPhaseAsixTwoPeriodRatio$ for
Model~A.

Because the chart supplies the target, these figures measure consistency with
future states \emph{as encoded by the linear chart}. They show that the learned
maps depart from the chart in a way that does not shrink under propagation;
they are not an independent ranking of predictive accuracy.

\paragraph{Dependence on the perturbation phase.}
\label{sec:phasedep}
Figure~\ref{fig:phase} shows that the two symmetry-distinct held-out phase
pairs disagree about Model~A. With only $\TestDistinctPhases$ such pairs in the
held-out set, every statement above pools two groups that need not agree.
Reading the recalibrated one-step ratio at the five nonzero amplitudes
individually, the first pair, $(\theta_{3},\theta_{11})$, gives
$\RestoredAsixInvariancePhaseADeltaA$,
$\RestoredAsixInvariancePhaseADeltaB$, $\RestoredAsixInvariancePhaseADeltaC$,
$\RestoredAsixInvariancePhaseADeltaD$ and
$\RestoredAsixInvariancePhaseADeltaE$, above one throughout and drifting
upward, while the second, $(\theta_{7},\theta_{15})$, gives
$\RestoredAsixInvariancePhaseBDeltaA$, $\RestoredAsixInvariancePhaseBDeltaB$,
$\RestoredAsixInvariancePhaseBDeltaC$, $\RestoredAsixInvariancePhaseBDeltaD$
and $\RestoredAsixInvariancePhaseBDeltaE$, falling below one at the larger
amplitudes. Model~A's apparent amplitude dependence is thus carried by one of
the two groups and reverses in the other: it does not replicate. Two groups are
enough to withhold any amplitude-threshold or phase-independent claim, which we
accordingly do not make, and too few to distinguish systematic phase structure
of the correction from sampling variability. We use the disagreement only in
the first, negative sense.

Model~B behaves consistently on this endpoint, its ratios running
$\RestoredBsixInvariancePhaseADeltaA$ through
$\RestoredBsixInvariancePhaseADeltaE$ for the first pair and
$\RestoredBsixInvariancePhaseBDeltaA$ through
$\RestoredBsixInvariancePhaseBDeltaE$ for the second, both falling below one at
the three larger amplitudes. Its phase-prediction advantage does not reproduce:
under future consistency targeted on the chart the ratio is
$\CanonicalPhaseBsixOnePeriodPhaseA$ against
$\CanonicalPhaseBsixOnePeriodPhaseB$ at one period and
$\CanonicalPhaseBsixTwoPeriodPhaseA$ against
$\CanonicalPhaseBsixTwoPeriodPhaseB$ at two, so the one effect whose interval
excluded zero under a scale-independent component also fails to replicate
across the two groups.

\paragraph{Dependence on the phase-anchor convention.}
\label{sec:conventionlimit}
The decomposition into a fixed linear chart plus a learned increment is not
unique at second order, because the anchor \eqref{eq:anchor} is a convention.
Replacing metric-nearest anchoring by a Floquet fixed-point convention leaves
the first-order chart unchanged and moves the coordinate at second order, on
validation data by a median $\ConventionPhaseMedian$ in phase and
$\ConventionAmplitudeMedian$ in standardised amplitude, against learned
corrections of median magnitude $\CorrectionAsixPhaseMedian$ and
$\CorrectionAsixAmplitudeMedian$ for Model~A and $\CorrectionBsixPhaseMedian$
and $\CorrectionBsixAmplitudeMedian$ for Model~B. For Model~A the ambiguity is
the same size as the correction and exceeds it in phase; for Model~B the
correction is larger by about a factor of two but the same order. The learned
increment $\Delta u$ therefore cannot be read as a unique second-order
foliation coefficient independently of the anchoring convention. This bears on
the interpretation of $\Delta u$, not on the validity of the complete maps:
$u_{\mathrm{IF}}$ remains a well-defined coordinate map for each fixed
convention, and whether the complete maps predict better than the chart is a
separate question, addressed by the endpoints above and left unresolved by
them.

\begin{widefigure}
\centering
\includegraphics[width=0.88\textwidth]{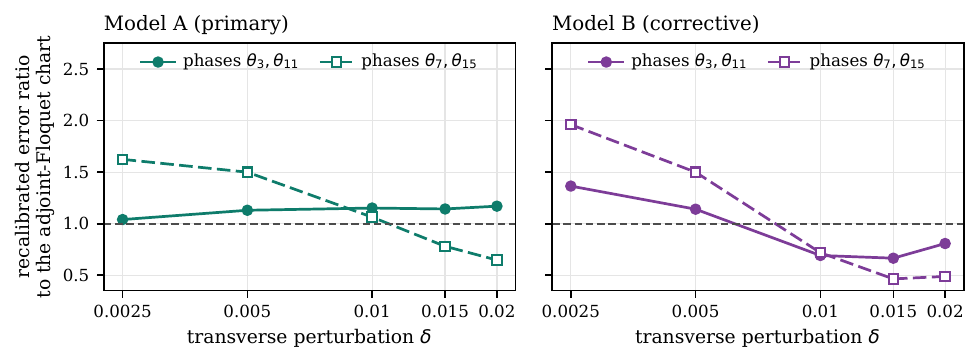}
\caption{Recalibrated one-step invariance ratio against the transverse
amplitude $\delta$, separated by the two symmetry-distinct held-out phase pairs
$(\theta_{3},\theta_{11})$ and $(\theta_{7},\theta_{15})$ (solid and dashed
lines); values below one favour the learned map. For Model~A the two groups
disagree and the trend does not replicate; for Model~B both fall below one at
the three larger amplitudes. The panel is descriptive only: two
symmetry-distinct phases are too few to establish general phase structure.}
\label{fig:phase}
\end{widefigure}

%% file: sections/05_discussion.tex
\section{Discussion and scope}
\label{sec:discussion}

\subsection{First-order geometry and relation to earlier work}
\label{sec:discfirst}

Proposition~\ref{prop:firstorder} isolates what is at stake. A reduced
coordinate inherits an $O(\delta^{2})$ invariance residual precisely when its
linearisation intertwines the linearised flow with the reduced map; a
projection that annihilates some other complement generically fails that
relation and pays an $O(\delta)$ price. Nothing in the condition refers to
angles, and a metric on state space is a modelling convenience, so there is no
reason to expect the $M$-orthogonal complement of the retained bundle to be the
invariant one. The statement is local and generic in the two senses made
explicit in Section~\ref{sec:semiconj}: it holds on the neighbourhood where
the expansion is valid, and a particular displacement direction may still
escape the first-order term by accidental cancellation.

The symptom is easy to misread. An orthogonal projection is exact on the orbit,
well conditioned, and passes every consistency check one would naturally apply
to it. Its failure appears only off the orbit, where it looks like a deficiency
of \emph{geometric projection in general}. A learned map compared against it
will appear to succeed by a wide margin, and on this benchmark most of that
margin is the linear term that the adjoint-Floquet chart removes at no
parameter cost.

On this limit cycle the two candidate complements are
$\AngleVperpWfMedian^{\circ}$ apart at the median and never closer than
$\AngleVperpWfMin^{\circ}$, and the retained part of a unit metric normal has
metric norm $\VperpSlowNormMedian$ against a discarded fast part of
$\VperpFastCoefficientMedian$. That retained part has two components. Its
phase coefficient is the infinitesimal phase response curve along $v_{\perp}$,
and because one retained direction is neutral this part of the mis-assignment
is not contracted by the linear flow. The slow-amplitude part is contracted, at
the rate of the retained pair, but is not small. Which of the two dominates a
measured residual depends on the units in which phase and amplitude are
compared, and we make no normalisation-free claim about that ranking.

The observed exponents, $\delta^{\SlopeOrthogonalInvariance}$ for the
metric-orthogonal chart against $\delta^{\SlopeObliqueInvariance}$ for the
adjoint-Floquet chart, agree with the first- and second-order behaviour that
Proposition~\ref{prop:firstorder} predicts for this system; the second
exponent is approximately, not exactly, two. They were obtained with no fitted
parameters in either map and predicted before the held-out set was opened. The
result stands on its own: it involves no learned quantity, no choice of latent
amplitude unit beyond the one both maps already share, and no reference chart
other than each map's own coordinates, so nothing in
Section~\ref{sec:discidentifiability} qualifies it. Its scope is nonetheless
local. It says nothing about states far from the orbit, where the leaves curve,
or about systems whose spectral gap is too small for the bundles to separate.

No spectral-submanifold model was computed here, and nothing above bears on
spectral-submanifold methodology; nor would it be right to draw the contrast as
manifold against quotient. Any reduction to an attracting manifold needs a rule
for off-manifold states. The principle we rely on, that the rule should follow
the stable foliation and not a convenient projection, has the long history
outlined in Section~\ref{sec:introduction}, from centre- and slow-manifold
initial conditions
\cite{roberts1989appropriate,cox1995initial,roberts2000computer,roberts2015emergent}
to recent spectral-submanifold work \cite{bettini2025oblique,bettini2026general}.
We claim no novelty for that principle. A closely related observation appears
in reductions by elimination of fast variables, where fast relaxation produces
a finite correction to the effective slow initial condition, the
\emph{initial slip}
\cite{vankampen1985elimination,haake1983initialslip,cox1995initial}. Such
corrections can become asymptotically small in idealised large-separation
limits while remaining appreciable at finite separation, and the present result
is consistent with that picture; the benchmark has no formal small parameter,
and we do not make the connection quantitative.

Nor is the correct first-order projection expensive here. Reconstructing the
full nonlinear or global foliation can be involved, but its linear-in-distance
approximation is not. At an equilibrium its leading order is given by the
adjoint eigenvectors of the critical modes, and higher orders follow from an
iteration of the same kind as the one that constructs the manifold, without a
full normal-form transformation \cite{roberts2000computer},
\cite[Section~12.3]{roberts2015emergent}. For a periodic orbit the leading
order is the dual Floquet frame, which the stability analysis of the cycle
already supplies. The adjoint-Floquet chart therefore needs no fitted
parameters and essentially no computation beyond the Floquet analysis, and the
computational difficulty lies at higher order. What this work adds is the
specialisation of the principle to a stable periodic orbit, where the retained
bundle contains a neutral direction and the required first-order projection is
available directly from the primal Floquet frame and its dual; the
quantification of the resulting discrepancy on an aeroelastic limit cycle; the
pre-specified scaling test; and the higher-order identifiability audit.

Closer to the application, Koopman models of supercritical flutter have
identified parameterised isostables and isochrons for an academic example and a
panel \cite{song2026koopman}, and spectral-submanifold reduction has been
applied to a forced pitch-and-plunge airfoil subject to flutter for dynamical
integrity analysis \cite{habib2025integrity}. Neither study targets controlled
perturbations transverse to a limit cycle. The present study isolates the
coordinate-assignment problem and does not compare against other data-driven
reduction approaches, such as sparse identification of the governing equations
\cite{brunton2016sindy}, dynamics-based autoencoders
\cite{mogharabin2025autoencoder} or post-bifurcation forecasting
\cite{ghadami2017postbif}.

\subsection{Higher-order identifiability}
\label{sec:discidentifiability}

A latent residual measured against a fixed amplitude unit is not a
representation-free quantity, since with the reduced flow frozen every constant
complex scaling of the retained pair is an admissible relabelling
(Section~\ref{sec:gauge}). Here the effect is not small: rescaling the chart
coordinate by the reciprocal $1/|\kappa|$ of a single fitted constant already
reproduces between
$\ScaleFractionAsixInvariance$ and $\ScaleFractionBsixOnePeriod$ of the
measured validation gain. No single latent residual therefore settles the
comparison. It does not follow that the learned maps are gauge artefacts. Their
linearisation is pinned to $Du_{\mathrm{aF}}$ on the orbit, and what
$\Delta u$ contributes is an amplitude-dependent contraction that tends to one
near the orbit, with median local ratio $\LocalAsixMedian$ for Model~A over the
validation ensemble against $\LocalAsixFirstPeriod$ during the first period
after release (Section~\ref{sec:scale}). The fitted $\kappa$ summarises that
effect compactly and shows how far the ranking moves under a global change of
unit, but the recalibrated numbers are a disclosed sensitivity, not a corrected
measurement.

Encoding once and comparing propagated coordinates against future full states
removes a real defect of a rollout error, in which the map under test supplies
both sides of the comparison so that a contracted coordinate shrinks both.
That is why the learned maps' departure from the chart appears in the
future-consistency diagnostic instead of cancelling. Because its target is
produced by $u_{\mathrm{aF}}$ itself, however, it cannot arbitrate between the
chart and a candidate correction to the chart. We read it as a
chart-consistency result: the learned assignments are less consistent with
future states encoded in the adjoint-Floquet chart, at every horizon tested.
We do not read it as showing that the learned maps are dynamically inferior.

Assembling the evidence gives a consistent but inconclusive picture. Under the
pre-specified fixed-normalisation residual the learned maps improve on the chart
on held-out data at the short horizons. Under the validation-fitted global
recalibration that improvement shrinks or reverses. Under future consistency
targeted on the adjoint-Floquet chart the linear chart is ahead. The
phase-resolved comparison does not replicate across the two symmetry-distinct
groups available. The anchor convention shifts the learned increment by an
amount comparable to the increment itself, so $\Delta u$ is not a uniquely
defined second-order object. And no diagnostic here supplies an independent
nonlinear reference coordinate, so the coordinate-assignment error of
Section~\ref{sec:semiconj}, as opposed to the semiconjugacy residual and the
prediction error, is never measured.

The conclusion is an identifiability statement, not a result in either
direction. The pre-specified comparison favours the learned maps but is
measured in units that partly reward a smaller coordinate; each diagnostic that
removes that sensitivity either changes the pinned normalisation or adopts the
baseline as its reference. We therefore report that the additional dynamical
value of the learned higher-order correction is not determined by the
available evidence, and we decline to convert post-hoc diagnostics into a
confirmatory null. Settling it requires something the present design does not
contain: an independently computed second-order phase--isostable expansion to
serve as a reference, or an endpoint invariant by construction under the
admissible transformations.

\subsection{Scope and limitations}
\label{sec:limitations}

\paragraph{Scope of the evidence.}
The evidence comes from one benchmark at one operating point with one
reduction: the isolated airfoil of \cite{nitti2021localized} at
$V=\OperatingVelocity$, $\xi_{h3}=\CubicPlungeStiffness$, reduced from four
states to three. The aerodynamics are quasi-steady, without lag states, and a
system with aerodynamic memory or a smaller spectral gap could behave
differently. The study isolates the coordinate-assignment problem; forced or
gust response, control, and dependence on airspeed or nonlinearity level are
outside its scope. The empirical magnitudes and higher-order findings reported
here are specific to this configuration; Proposition~\ref{prop:firstorder}
itself applies under the assumptions stated in Section~\ref{sec:semiconj}.

\paragraph{Perturbation family and amplitude range.}
Off-manifold states were generated along a single direction, $v_{\perp}$, at
five nonzero amplitudes up to $\delta=0.02$. A perturbation aligned with $w_{f}$
or with a slow mode would test different aspects of the reduction map. Nothing
here extrapolates beyond $\delta=0.02$, and no amplitude threshold is claimed.

\paragraph{Phase coverage.}
The held-out trajectories come from only $\TestDistinctPhases$
symmetry-distinct phase pairs (Section~\ref{sec:protocol}), so per-amplitude
statements rest on four units each. The two pairs disagree about Model~A
(Section~\ref{sec:phasedep}), and with two groups there is no way to tell
whether that is sampling noise or genuine phase structure of the correction;
for the same reason the cluster-bootstrap intervals describe the tested
conditions, not generalisation over phase. Phase coverage, not trajectory
count, is the binding constraint on this dataset.

\paragraph{Fixed reduced dynamics.}
The reduced map \eqref{eq:reducedflow} is the frozen linear Floquet map for all
four coordinate maps. This isolates the reduction map, and it is also a
restriction: the multi-period results in particular may say as much about the
shared reduced dynamics as about the coordinates.

\paragraph{Evidential status.}
The first-order scaling test carries the confirmatory weight of the study.
Model~A provides the pre-specified primary higher-order endpoint, whose
interpretation is qualified by the sensitivity analyses of
Sections~\ref{sec:scale} and~\ref{sec:canonical}. The amended selection rule that promoted
Model~B was identified after Model~A had been evaluated on held-out data
(Section~\ref{sec:selection}), and the recalibration and future-consistency
analyses of Sections~\ref{sec:controlled} and~\ref{sec:canonical} are post hoc,
although their transformations were fitted on validation data and hashed before
the held-out trajectories were reopened. An earlier learned map, which was
trained with a collapse penalty along $v_{\perp}$ that
Section~\ref{sec:theory} shows to be inconsistent and which was not
equivariant under \eqref{eq:symmetry}, is excluded from the evidence.

\paragraph{Reference coordinate and anchor convention.}
No exact nonlinear phase--isostable coordinate is computed for this system, so
the coordinate-assignment error is never measured and each higher-order
diagnostic fixes a convention of its own. The decomposition into a linear chart
plus a learned second-order increment also depends on the phase-anchor
convention, by an amount comparable to the learned corrections themselves
(Section~\ref{sec:conventionlimit}), so we make no uniqueness claim for what
the networks represent. A second-order phase--isostable expansion computed by
the parameterization method \cite{perezcervera2020global} would be the most
informative comparator for the learned correction.

%% file: sections/06_conclusions.tex
\section{Conclusions}
\label{sec:conclusions}

For the aeroelastic limit cycle considered here, the choice of discarded
direction governs whether a reduced coordinate satisfies the linearised
semiconjugacy condition, and hence whether its invariance residual for
off-manifold states is of first or of second order. The metric-orthogonal complement of the retained slow Floquet bundle
is not the fast Floquet direction tangent to the strong-stable fibre on this
benchmark: the two differ by a
median $\AngleVperpWfMedian^{\circ}$, so the metric-normal projection fails the
linearised semiconjugacy condition and, by
Proposition~\ref{prop:firstorder}, generically leaves a residual of order
$\delta$ in the transverse displacement. The retained content it discards is
both phase and slow amplitude; the phase part is not contracted by the linear
flow, and which part dominates the measured residual depends on the coordinate
units adopted. Projecting instead along the fast Floquet direction, as the
classical adjoint-Floquet phase--isostable chart does, removes the first-order term
at no parameter cost and leaves the approximately quadratic residual predicted
in advance.

Once that first-order geometry is imposed, learned corrections built to vanish
with their first derivative on the orbit, to respect the discrete symmetry
exactly and to share the frozen reduced flow do reduce the pre-specified
fixed-normalisation residual on held-out trajectories. Two post-hoc
diagnostics qualify that reading. A validation-fitted global amplitude
recalibration moves the ranking substantially, showing that a latent residual
measured against a fixed unit is not representation free, and a
future-consistency test whose target coordinates are supplied by the
adjoint-Floquet chart places both learned maps behind the chart at every
horizon tested. Neither settles the question in the opposite direction: the
recalibration alters the first-order normalisation that the maps already share,
and the future-consistency target is the baseline's own reading of the future
state. The phase-resolved results do not replicate across the two
symmetry-distinct groups available, and the anchor convention displaces the
learned increment by an amount comparable to the increment.

For the tested aeroelastic limit cycle, satisfying the first-order Floquet
relation is necessary to eliminate the generic $O(\delta)$ semiconjugacy defect
under the frozen reduced dynamics, and the chart that does so is available from
the Floquet analysis at no fitting cost. Within this benchmark, operating point, perturbation family and
amplitude range, no confirmatory scale-independent benefit of the learned
higher-order correction was demonstrated, and whether it improves the
underlying nonlinear phase--isostable assignment remains unidentified by the
available, representation-dependent diagnostics. We do not extend that
statement to invariant foliations in general, to other flutter systems or
periodic orbits, or to higher-order phase--isostable corrections as such.